  \newtheorem{theorem}{Theorem}
  \newtheorem{remark}{Remark}
  \newtheorem{definition}{Definition}
  \newcommand{\PreserveBackslash}[1]{\let\temp=\\#1\let\\=\temp}
  \newcolumntype{C}[1]{>{\PreserveBackslash\centering}m{#1}}
\begin{document}
% Do not put math or special symbols in the title.
\title{Multi-goal Rapidly Exploring Random Tree\\with Safety and Dynamic Constraints\\for UAV Cooperative Path Planning}

\author{Thu Hang Khuat$^{1*}$ \orcidlink{0009-0003-9611-8678}, Duy-Nam Bui$^{1}$\orcidlink{0009-0001-0837-4360}, Hoa TT. Nguyen$^{2}$,\\Mien L. Trinh$^{3}$\orcidlink{0000-0003-4305-7130}, Minh T. Nguyen$^{2}$\orcidlink{0000-0002-7034-5544}, Manh Duong Phung$^{4}$\orcidlink{0000-0001-5247-6180}~\IEEEmembership{Senior Member,~IEEE}% <-this % stops a space
\thanks{*\textit{Corresponding author}}
\thanks{$^{1}$Thu Hang Khuat and Duy-Nam Bui are with the Vietnam National University, Hanoi, Vietnam. {\tt\footnotesize 23025115@vnu.edu.vn; duynam@ieee.org}}
\thanks{$^{2}$Hoa TT. Nguyen and Minh T. Nguyen are with the Thai Nguyen University of Technology, Thai Nguyen University, Thai Nguyen, Vietnam. {\tt\footnotesize nguyenthituyethoa@tnut.edu.vn; nguyentuanminh@tnut.edu.vn}}
\thanks{$^{3}$Mien L. Trinh is with the University of Transport and Communications, Hanoi, Vietnam. {\tt\footnotesize mientl@utc.edu.vn}}
\thanks{$^{4}$Manh Duong Phung is with the Fulbright University Vietnam, Ho Chi Minh City, Vietnam. {\tt\footnotesize duong.phung@fulbright.edu.vn}}
\thanks{The authors would like to thank the Master, PhD Scholarship Programme (Code: VINIF.2023.Ths.044) of Vingroup Innovation Foundation (VINIF) and the Ministry of Education and Training (Project B2023-GHA-01), Viet Nam, for the support on this work.}
}

\makeatletter

% The paper headers
%\markboth{IEEE Transactions on Vehicular Technology. Preprint Version.}{Thu-Hang Khuat \MakeLowercase{\textit{et al.}}: MultiRRT with Safety and Dynamic Constraints for UAV CPP} 

% make the title area
\maketitle

\begin{abstract}
Cooperative path planning is gaining its importance due to the increasing demand on using multiple unmanned aerial vehicles (UAVs) for complex missions. This work addresses the problem by introducing a new algorithm named MultiRRT that extends the rapidly exploring random tree (RRT) to generate paths for a group of UAVs to reach multiple goal locations at the same time. We first derive the dynamics constraint of the UAV and include it in the problem formulation. MultiRRT is then developed, taking into account the cooperative requirements and safe constraints during its path-searching process. The algorithm features two new mechanisms, node reduction and Bezier interpolation, to ensure the feasibility and optimality of the paths generated. Importantly, the interpolated paths are proven to meet the safety and dynamics constraints imposed by obstacles and the UAVs. A number of simulations, comparisons, and experiments have been conducted to evaluate the performance of the proposed approach. The results show that MultiRRT can generate collision-free paths for multiple UAVs to reach their goals with better scores in path length and smoothness metrics than state-of-the-art RRT variants including Theta-RRT, FN-RRT, RRT*, and RRT*-Smart. The generated paths are also tested in practical flights with real UAVs to evaluate their validity for cooperative tasks. The source code of the algorithm is available at {\tt\url{https://github.com/duynamrcv/multi-target_RRT}}
\end{abstract}

\begin{IEEEkeywords}
Cooperative path planning, rapidly exploring random tree, unmanned aerial vehicle
\end{IEEEkeywords}

% For peer review papers, you can put extra information on the cover
% page as needed:
% \ifCLASSOPTIONpeerreview
% \begin{center} \bfseries EDICS Category: 3-BBND \end{center}
% \fi
%
% For peerreview papers, this IEEEtran command inserts a page break and
% creates the second title. It will be ignored for other modes.
\IEEEpeerreviewmaketitle

\section{Introduction}
In recent years, the use of multiple unmanned aerial vehicles (UAVs) has emerged as an essential approach for tackling increasingly complex and demanding tasks. By distributing resources and combining sensor data, multiple UAVs can achieve expanded operational areas, faster data collection, and increased mission robustness\mbox{~\cite{10077453,10102336,9628162,do2021formation}}. These capabilities are particularly advantageous in many applications such as search and rescue operations, environmental monitoring, disaster response, and precision agriculture, where extensive coverage and real-time data collection from multiple perspectives are important. To obtain these capabilities, the UAVs need a robust and effective cooperative path planning system to guide them through complex environments, avoid collisions, and optimize their routes.

In cooperative path planning, the goal is to find a feasible path for each UAV from its start to its final location, fulfilling several requirements and constraints~\cite{Xu2020,Xu2023}. The requirements involve task allocation and optimality, such as arriving at certain locations at the same time,  minimizing the path length and energy consumption, and maintaining communication\mbox{\cite{10134570,10234672,9703683,10570813}}. The constraints relate to UAV dynamics and safety such as turning angles and distance to obstacles. Approaches to this problem can be categorized into four primary groups including mathematical programming, swarm intelligence, artificial potential field, and graph-based planning methods.

% \textcolor{red}{Can 1 2 cau giai thich phuong phap nay}. 
Mathematical programming addresses the path planning problem by modeling it as an optimization where the aim is to minimize or maximize an objective function subject to a set of constraints. Programming techniques such as mixed integer programming (MIP), nonlinear programming (NP), and dynamic programming (DP) are then used to solve the problem~\cite{10225271,Ioan2021, 10149375}. In~\cite{10225271}, the coverage path planning problem for multiple robots is reduced to the min-max rooted tree cover problem. An MIP model is then proposed to solve that problem to find the paths. In~\cite{9170782}, a set of mixed integer linear programming models is introduced to find the paths for multiple robots to capture a moving target within a given time limit. In~\cite{1428837}, a neuro-dynamic programming algorithm is proposed to generate paths that allow the UAVs to avoid threat zones and arrive at the targets at the same time from different directions. These approaches provide suboptimal solutions and offer flexibility to handle multiple constraints. However, their performance depends on the definition of objective functions, which are often simplified representations of the real-world scenario. In addition, the computational cost of mathematical programming techniques grows exponentially as the scale of the planning problem increases.

Similar to mathematical programming, swarm intelligence techniques also formulate path planning as an optimization problem. However, they use nature-inspired algorithms such as the genetic algorithm (GA) and particle swarm optimization (PSO) to solve it. These techniques utilize swarm intelligence, such as social and collective behaviors, to better explore and exploit the search space for the optimal solutions. In~\cite{cao2019multi}, the GA is used to solve the cooperative reconnaissance path planning for multiple UAVs taking off from different bases. In~\cite{ZainACO5727}, maximum-minimum ant colony optimization and differential evolution are combined to generate collision-free paths for multiple UAVs. An improved grey wolf optimizer is introduced in~\cite{yao2016multi} to solve the trajectory planning subject to various constraints. Other algorithms such as PSO and ant colony optimization (ACO) are also used in ~\cite{Xu2023,10417512,Das2020} to address the cooperative path planning problem with different constraints and requirements. The main advantages of nature-inspired algorithms lie in their simple structure and fast generation of solutions~\cite{yao2016multi}. The quality of the solutions, however, depends on the initialization, population size, and mechanism to avoid local optimum.

Artificial potential field (APF) is another approach that addresses the path planning problem by simulating virtual forces that influence the UAV~\cite{9234396, 9591322}. In this method, the goal generates attractive forces while obstacles create repulsive forces so that they together drive the robot through the environment like a particle moving in a force field. In particular, a rotating potential field is used in~\cite{9538804} to generate paths for a multi-UAV system that can avoid local minimum. In~\cite{9143127}, an improved APF is introduced where the k-means algorithm is used to optimize attractive forces. The paths generated by APF, however, are not optimal since they do not include a cost function for evaluation. Local optimum is also another issue as the attractive and repulsive forces can be equal at certain potential points~\cite{9591322}. 

Graph-based path planning is another direction that represents the environment as a network of nodes and edges, where nodes represent possible positions and edges represent viable paths between them \cite{graph2020}. This representation allows algorithms such as probabilistic roadmap (PRM) \cite{5406221} and rapidly-exploring random tree (RRT) \cite{02783640122067453} to determine the route from a starting point to a destination. In \cite{yan2013path}, the PRM is used to generate paths for UAVs in complex 3D environments in which the A* is utilized to find feasible paths. In \cite{kothari2013probabilistically}, the RRT is employed to generate flyable paths in real time for UAVs. While the graph-based approach is fast and efficient, the paths generated are not optimized due to its random expansion mechanism. Some improvements have been introduced to address this issue such as RRT* \cite{Karaman2011} and RRT*-Smart \cite{Nasir2013}. They however do not consider safety and dynamic constraints which are critical to generate feasible paths for UAVs. 

Recently, deep reinforcement learning has been used to model environmental dynamics and optimize UAV trajectories\mbox{\cite{10060414}}. For instance, a deep Q network is introduced in\mbox{\cite{10556816}} to control the movements of the UAVs in adaption to the terrestrial structure for Peer-to-Peer (P2P) communication between UAVs. However, this method requires significant computational resources for both training and real-time decision-making and faces difficulties in generalizing learned policies across different environments with multiple moving UAVs. In addition, multiple UAVs also bring new challenges related to their heterogeneous capability, task allocation, and time coordination. The high mobility of UAVs causes the topology of multi-UAVs to become dynamic, which makes the UAVs' communication connections sparse and intermittent~\cite{luo2023path,Minh10212603}. Therefore, incorporating time coordination, dynamic constraints, and path smoothing into path planning to increase its feasibility and efficiency is a problem that is worth further investigation.

In this work, we address the cooperative path planning problem by introducing a new algorithm, named MultiRRT, to generate paths for multiple UAVs cooperated in a complex environment. We first formulate the problem by defining the task requirements, UAV models, and dynamics constraints. MultiRRT is then introduced to solve the problem by generating paths for multiple UAVs. Several new mechanisms have been included in the algorithm to optimize the paths. Our contributions are threefold:
\begin{enumerate}[label=\roman*.]
    \item Derivation of the dynamic constraint of the quadrotor UAV, which is essential to generate flyable paths. 
    \item Proposal of MultiRRT that incorporates safety and dynamics constraints during its branch expansion to find paths to multiple goal locations. The algorithm also features a new node reduction mechanism to optimize the path length.
    \item Introduction of a new Bezier interpolation technique to smooth the flight paths. The interpolated paths are proven to be collision-free and satisfy the dynamic constraint.
\end{enumerate}

The rest of this paper is organized as follows. Section~\ref{sec:problem} describes the cooperative path planning problem. Section~\ref{sec:model} introduces the UAV model and the calculation of its dynamic constraint. Section~\ref{sec:propose} presents the proposed path planning algorithm for multiple UAVs. Simulations, comparisons, and experiments are presented in Section~\ref{sec:results}. The paper ends with a conclusion drawn in Section~\ref{sec:conclusion}.
\section{Problem Formulation}\label{sec:problem}
\begin{figure}
    \centering
    \includegraphics[width=0.48\textwidth]{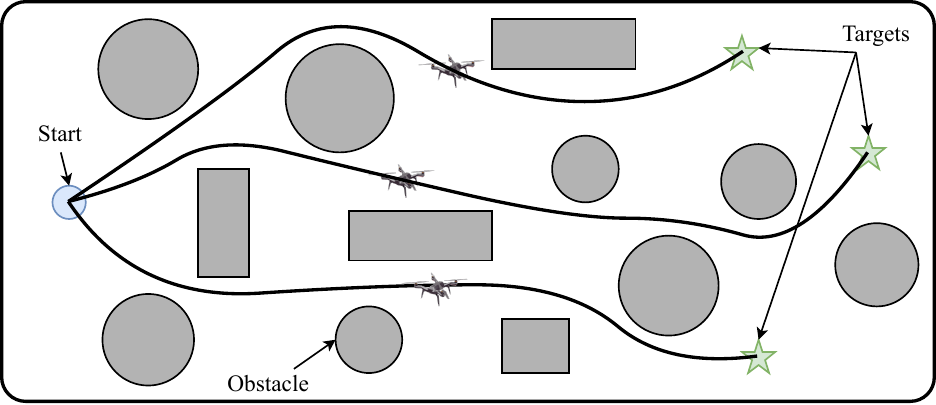}
    \caption{A cooperative path planning task where multiple UAVs need to reach different target locations at the same time}
    \label{fig:problem}
\end{figure}

This study considers a group of UAVs that must traverse a complex environment to reach different target locations and complete tasks such as disaster rescue or emergency network establishment, as shown in Figure~\ref{fig:problem}. Assuming that there are $n$ target locations, the task involves finding $n$ paths that meet the following requirements: 

\begin{enumerate}[label=\roman*.]
  \item Guide the UAVs to simultaneously arrive $n$ pre-defined locations; 
  \item Satisfy constraints imposed by the UAV's dynamics;
  \item Ensure that there are no collisions among the UAVs or between the UAVs and their operating environment. 
\end{enumerate}

The first requirement allows the UAVs to fulfill their cooperative task, while the second and third ones maintain their safety and efficiency. In our approach, the UAVs are modeled like Dubins vehicles flying at a fixed altitude with a constant forward speed, a widely adopted model for UAVs~\cite{5427036, manyam2017tightly, YAO2017217}. In this model, the collision among UAVs can be avoided by simply choosing a different flight altitude for each UAV, i.e., $h_i \neq h_q$, $\forall i\neq q \in \{1,...,n\} $, where $h_i$ is the altitude of UAV $i$. Let $v_i$ be the speed of UAV $i$ and $L_i$ be the length of path $\mathcal{P}_i$. The cooperation in time among the UAVs then can be obtained by choosing $v_i$ so that it is inversely proportional to the path length, i.e.,
\begin{equation}
    \dfrac{L_i}{v_i} =\dfrac{L_{i+1}}{v_{i+1}}\quad\forall i={1,...,n-1}.
\end{equation}

To include dynamic constraints, this work presents path $\mathcal{P}_i$ as a set of $N_i$ line segments consisting of $N_i+1$ points $P_{ij}$. A flyable path then needs to have its turning angles staying within the maneuverable limits of the UAV. Let $\gamma_{ij}$ be the turning angle at $P_{ij}$, which is defined by the angle between two vectors $\overrightarrow{P_{i,j-1}P_{ij}}$ and $\overrightarrow{P_{ij}P_{i,j+1}}$, as illustrated in Figure~\ref{fig:angle}. It is given by

\begin{equation}    \gamma_{ij}=\cos^{-1}\left(\dfrac{\overrightarrow{P_{i,j-1}P_{ij}}.\overrightarrow{P_{ij}P_{i,j+1}}}{\left\Vert \overrightarrow{P_{i,j-1}P_{ij}}\right\Vert \left\Vert \overrightarrow{P_{ij}P_{i,j+1}}\right\Vert }\right).
    \label{eqn:angle}
\end{equation}
Denote $\gamma_\text{max}$ as the maximum turning angle of the UAV. A flyable path then needs to meet the following condition:
\begin{equation}
    \gamma_{ij}  \leq \gamma_\text{max}, \quad\forall j \in \{2,...,N\}.
    \label{eqn:constraint}
\end{equation}
The value of $\gamma_\text{max}$ can be computed based on the UAV dynamics presented in the next section.

%%%%%%%%%%%%%%%%%
%% UAV dynamic %%
%%%%%%%%%%%%%%%%%

\section{UAV dynamic model and constraint analysis}\label{sec:model}

\begin{figure}
\centering
\begin{subfigure}[b]{0.21\textwidth}
    \centering
    \includegraphics[width=\textwidth]{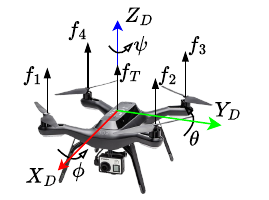}
    \caption{The UAV model}
    \label{fig:vehicle}
\end{subfigure}
\begin{subfigure}[b]{0.27\textwidth}
    \centering
    \includegraphics[width=\textwidth]{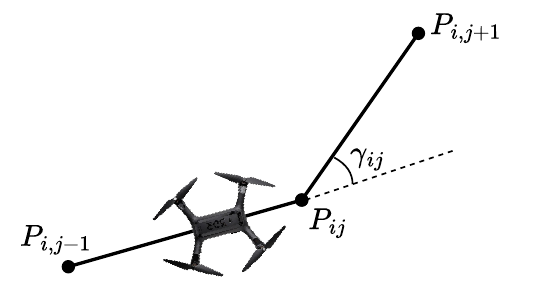}
    \caption{Turning angle}
    \label{fig:angle}
\end{subfigure}
\caption{UAV model and its turning angle}
\end{figure}

\subsection{Dynamic model}
The UAVs used in this work are quadcopters with two pairs of propellers rotating in opposite directions, as depicted in Figure~\ref{fig:vehicle}. Let $f_l$, $l=\left\{1,2,3,4\right\}$, be the thrust generated by propeller $l$. According to~\cite{Wang2016}, $f_l$ is proportional to the square of rotational speed $\Omega_l$ of motor $l$, i.e.,
\begin{equation}
    f_l=c_t\Omega_l^2\quad\forall l={1,2,3,4},
\end{equation}
where $c_t$ is the thrust coefficient. The total thrust is then given by
\begin{equation}
    f_T=\sum_{k=1}^{4}f_{l}.
\end{equation}

Since $f_T$ acts along the $Z_D$ axis, the thrust vector in the body frame is given by $f_D = \left[0,0,f_T\right]^T$. Apart from the thrust, the UAV is also influenced by drag force, $f_d$, caused by air friction. It is proportional to the UAV's linear velocities, i.e., $f_d = c_f\left[\dot{x},\dot{y},\dot{z}\right]^T$, where $c_f$ is the friction coefficient. The motion of the UAV in the inertial frame is then driven by the gravitational, thrust, and drag forces. Let $R$ be the rotation matrix mapping the body frame to the inertial frame, then
% \begin{strip}
% \begin{equation}
%     \begin{aligned}
%     R_{xyz}&=R_{z}\left(\psi\right)R_{y}\left(\theta\right)R_{x}\left(\phi\right)\\
%     &=\left[\begin{array}{ccc}
% \cos\theta\cos\psi & \sin\phi\sin\theta\cos\psi-\cos\phi\sin\psi & \cos\phi\sin\theta\cos\psi+\sin\phi\sin\psi\\
% \cos\theta\sin\psi & \sin\phi\sin\theta\sin\psi+\cos\phi\cos\psi & \cos\phi\sin\theta\sin\psi-\sin\phi\cos\psi\\
% -\sin\theta & \sin\phi\cos\theta & \cos\phi\cos\theta
% \end{array}\right],
%     \end{aligned}
% \end{equation}
% \end{strip}
\begin{equation}
    R_{xyz}=R_{z}\left(\psi\right)R_{y}\left(\theta\right)R_{x}\left(\phi\right)
\end{equation}
with
\begin{equation}
\begin{aligned}
    R_{z}\left(\psi\right)&=\left[\begin{array}{ccc}
\cos\psi & -\sin\psi & 0\\
\sin\psi & \cos\psi & 0\\
0 & 0 & 1
\end{array}\right]\\
    R_{y}\left(\theta\right)&=\left[\begin{array}{ccc}
\cos\theta & 0 & \sin\theta\\
0 & 1 & 0\\
-\sin\theta & 0 & \cos\theta
\end{array}\right]\\
    R_{x}\left(\phi\right)&=\left[\begin{array}{ccc}
1 & 0 & 0\\
0 & \cos\phi & -\sin\phi\\
0 & \sin\phi & \cos\phi
\end{array}\right]
\end{aligned}
\end{equation}
where $\phi$, $\theta$, and $\psi$ are respectively the roll, pitch, and yaw angles, and $R_{x}$, $R_{y}$, and $R_{z}$ are respectively the rotation matrices about the $x$, $y$, and $z$ axes. The dynamic equations describing the linear motion of the UAV are then given by
\begin{equation}
    \left[\begin{array}{c}
\ddot x\\
\ddot y\\
\ddot z
\end{array}\right] = \frac{1}{m}\left(R_{xyz}f_D - f_d - \left[\begin{array}{c}
0\\
0\\
mg
\end{array}\right]\right) ,
\label{eqn:dynamics}
\end{equation}
where $m$ is the mass of the UAV and $g$ is the gravitational acceleration.

\subsection{Dynamic constraint analysis} \label{sec:UAVconstraint}
Assume that at time $t$, the UAV is flying along $X_D$ at velocity $v_x$ with pitching angle $\theta$. The UAV then tries to turn an angle $\gamma_\text{max}$ in direction $Y_D$ by carrying out a rotation of an angle $\phi$ about the $X_D$ axis. The calculation of $\gamma_\text{max}$ then can be derived as follows.

Let $f_I = [f_{Ix},f_{Iy},f_{Iz}]^T$ be the force acting on the UAV in the inertial frame. This force relates to $f_D$ by the rotations of $\phi$ and $\theta$ about the $X_D$ and $Y_D$ axes as follows: 

\begin{equation}
    f_I = R_{y}\left(\theta\right)R_{x}\left(\phi\right)f_D= \left[\begin{array}{c}
f_T \cos\phi \sin\theta\\
-f_T \sin\phi \\
f_T \cos\phi \cos\theta
\end{array}\right].
\label{eqn:force_inertial}
\end{equation}

Since the UAV flies at constant speed $v_x$, thrust force $f_{Ix}$ is balanced with drag force $f_{dx}$. From \eqref{eqn:force_inertial}, we get 
\begin{equation}
    f_{Ix}=f_{T}\cos\phi\sin\theta=c_fv_{x}.
    \label{eqn:fx}
\end{equation}

When the UAV turns, $f_{Iy}$ will generate centripetal acceleration, $a_y$, in $Y_B$ direction and $v_x$ will become the tangential velocity, i.e.,
\begin{equation}
    f_{Iy}=-ma_y=-\frac{mv_x^2}{R},
    \label{eqn:ay}
\end{equation}
where $R$ is the turning radius and the negative sign implies that the direction of $a_y$ is opposite of the $Y_B$ axis. Substituting $f_{Iy}$ from (\ref{eqn:force_inertial}) to (\ref{eqn:ay}) gives
\begin{equation}
    f_T\sin\phi=\frac{mv_x^2}{R}.
    \label{eqn:phi_constraint}
\end{equation}

As the UAV flies at a fixed altitude, the force $f_{Iz}$ acting in $z$ direction is balanced with the gravity. We have
\begin{equation}
  f_{Iz}= f_{T}\cos\phi\cos\theta= mg.
    \label{eqn:fz}
\end{equation}

From (\ref{eqn:fx}) and (\ref{eqn:fz}), pitch angle $\theta$ can be computed as follows:
\begin{equation}
    \theta=\tan^{-1}\left(\frac{c_fv_x}{mg}\right).
    \label{eqn:theta}
\end{equation}

Substituting (\ref{eqn:theta}) into (\ref{eqn:fz}) gives
\begin{equation}
    \phi=\cos^{-1}\left(\frac{c_fv_x}{f_T\sin\theta}\right).
    \label{eqn:costheta}
\end{equation}

On the other hand, from (\ref{eqn:theta}), we get
\begin{equation}
    \sin\theta=\dfrac{1}{\sqrt{\dfrac{1}{\tan^2\theta}+1}}=\dfrac{c_fv_x}{\sqrt{c_f^2v_x^2 + m^2g^2}}.
    \label{eqn:sintheta}
\end{equation}

Rearranging (\ref{eqn:phi_constraint}) and substituting (\ref{eqn:costheta}) and (\ref{eqn:sintheta}) into it give

\begin{equation}
    R=\frac{mv_x^2}{f_T\sqrt{1 - \cos^2\phi}}=\frac{mv_x^2}{\sqrt{f_T^2 -c_f^2v_x^2-m^2g^2 }}.
    \label{eqn:radius}
\end{equation}

As the turning angle $\gamma$ is inversely proportional to the turning radius $R$ and $v_x$ is constant, we have

\begin{equation}
    \gamma_\text{max}=\dfrac{1}{R_\text{min}}= \dfrac{\sqrt{f_\text{Tmax}^2-c_f^{2}v_x^{2}-m^{2}g^{2}}}{mv_x^{2}} \label{eqn:max_angle} ,
\end{equation}
where $f_\text{Tmax}$ is the maximum thrust force. The maximum turning angle $\gamma_\text{max}$ defines the limit at which a controller can adjust the UAV's direction to follow the planned path. The path planning algorithm therefore must take into account this constraint when generating paths to ensure they are feasible for the UAV to follow in practice.
\section{MultiRRT for cooperative path planning}\label{sec:propose}

This section presents our approach to address the cooperative path planning problem by extending the RRT for multiple goals.

\subsection{Multi-goal RRT with dynamic constraint}\label{sec:angle}

\begin{figure}
    \centering
    \includegraphics[width=0.4\textwidth]{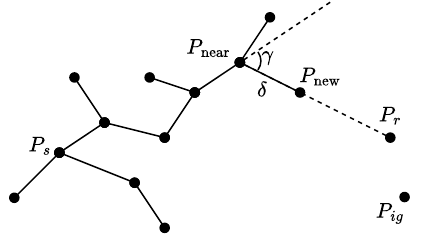}
    \caption{Illustration of the enhanced RRT algorithm with the constraint on the turning angle}
    \label{fig:RRTalg}
\end{figure}

Denote $P_s$ as the start location of the UAVs and $P_{ig}$ as the goal location for UAV $i$. The path for the $i^{th}$ UAV is represented by a set of nodes $\mathcal{P}_i=\{P_{ij}\}$, $\forall j \in \{1,...,N_i+1\}$. Denote $\mathcal{G}$ as the RRT tree that contains all paths $\mathcal{P}_i$, i.e., $\mathcal{P}_i\subset\mathcal{G}$, $\forall i\in\left\{1,...,n\right\}$. $\mathcal{G}$ is formed through an expansion process that involves random nodes. The expansion starts with a node $P_r$ chosen randomly within the environment, as illustrated in Figure~\ref{fig:RRTalg}. Let $P_\text{near} \in \mathcal{G}$ be the node closest to $P_{r}$. A new node $P_\text{new}$ is then created to lie on the vector $\overrightarrow{P_\text{near}P_{r}}$ with a predefined distance $\delta$ as follows
\begin{equation}
    \begin{cases}
\overrightarrow{P_\text{near}P_{r}}=\lambda\overrightarrow{P_\text{near}P_\text{new}}\\
\left\Vert \overrightarrow{P_\text{near}P_\text{new}}\right\Vert =\min\left\{\left\Vert \overrightarrow{P_\text{near}P_r}\right\Vert,\delta\right\} ,
\end{cases}
\label{eqn:steer}
\end{equation}
where $\lambda\neq0$ is a scalar. In our work, $P_\text{new}$ is only added to tree $\mathcal{G}$ if it satisfies two conditions: (i) the line segment connecting this node to its nearest neighbor ($P_\text{near}$) is collision-free; and (ii) the turning angle of the resulting path segment does not exceed the maximum turning angle ($\gamma_\text{max}$). The first condition guarantees safety while the second ensures that the generated paths meet the UAVs' dynamic constraints. In collision checking, the UAV's size is considered by expanding the boundaries of obstacles in the environment by an amount equal to the UAV's radius. For the turning angle, assume $P_\text{new}$ is connected to node $P_k\in\mathcal{G}$ and $P_{k-1}$ is the parent node of $P_k$. The turning angle $\gamma$ at $P_k$ is then the angle between $\overrightarrow{P_{k-1}P_{k}}$ and $\overrightarrow{P_kP_\text{new}}$. This angle needs to meet constraint \eqref{eqn:constraint} for $P_\text{new}$ to be added to $\mathcal{G}$, i.e.,
\begin{equation}
\begin{aligned}
    \gamma&=\cos^{-1}\left(\dfrac{\overrightarrow{P_{k-1}P_k}.\overrightarrow{P_kP_\text{new}}}{\left\Vert \overrightarrow{P_{k-1}P_k}\right\Vert \left\Vert \overrightarrow{P_kP_\text{new}}\right\Vert }\right)\leq\gamma_\text{max}.
\end{aligned}
\label{eqn:gamma}
\end{equation}

Algorithm~\ref{alg:rrt} presents implementation of the algorithm, where the tree expansion only stops if all goal locations $P_{ig}$ are added to the tree. The paths to all goals are then generated by tracing back the tree.

\begin{algorithm}
% \KwResult{paths}
\tcc*[h]{Initialization}\\
$\mathcal{P}_i\leftarrow\varnothing$, $\forall i \in\left\{1,...,n\right\}$\;
$\mathcal{G}\leftarrow P_s$\;
count $\leftarrow$ 0\tcc*[r]{number of paths}
\tcc*[h]{Path generation via tree expansion}\\

\While{count $\neq$ n}
{
    Generate random node $P_r$\;
    Find node $P_\text{near}\in\mathcal{G}$ nearest to $P_r$\;
    Generate node $P_\text{new}$ based on equation \eqref{eqn:steer}\;
    \If{$\overrightarrow{P_\text{near}P_\text{new}}$ is collision-free and satisfies constraint \eqref{eqn:gamma}}
    {
        $\mathcal{G}\leftarrow P_\text{new}$\;
        \For{$i\leftarrow$1 to n}
        {
            \If{$\overrightarrow{P_\text{new}P_{ig}}$ is collision-free and satisfies constraint \eqref{eqn:gamma}}
            {
                $\mathcal{G}\leftarrow P_{ig}$\;
                Generate path $\mathcal{P}_i$ by tracing back tree $\mathcal{G}$ from end node $P_{ig}$\;
                count $\leftarrow$ count + 1\;
            }
        }
    }
}
\caption{Pseudo code of the enhanced RRT algorithm with dynamic constraint}
\label{alg:rrt}
\end{algorithm}

\subsection{Node reduction mechanism} \label{sec:remove_node}

\begin{figure}
    \centering
    \includegraphics[width=0.45\textwidth]{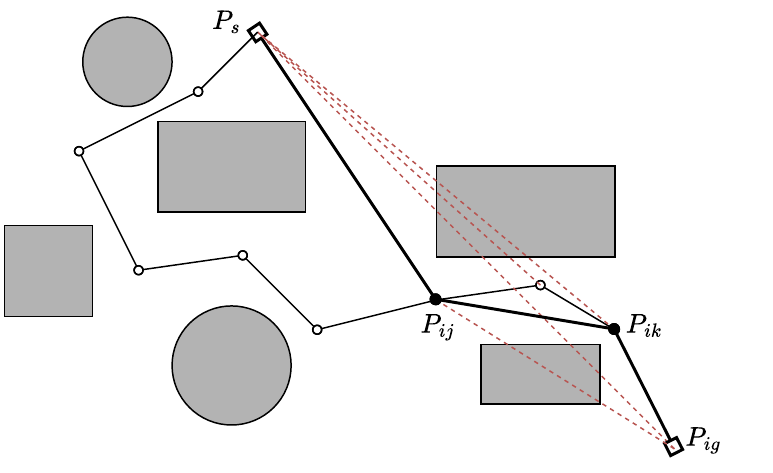}
    \caption{Illustration of redundant node reduction using the triangle inequality}
    \label{fig:reduce}
\end{figure}

The paths generated by RRT are not optimal due to its random expansion fashion. We address this using a node reduction algorithm as illustrated in Figure~\ref{fig:reduce}. First, the algorithm checks if there exists a node $P_{ij}$ nearby the start node $P_s$ and a node $P_{ik}$ nearby the goal $P_{ig}$. It then evaluates if the path segment $P_{ij}P_{ik}$ is collision-free and satisfies constraint \eqref{eqn:gamma}. The algorithm finally connects $P_{ij}$ to $P_{ik}$ and removes the nodes between them to form a shorter path. Denote $P_{i,\text{tail}}$ as the end node of $\mathcal{P}_i$, the node reduction algorithm is presented in Algorithm~\ref{alg:remove_node}.

\begin{algorithm}
\SetAlgoLined
\tcc*[h]{Initialization}\\
Create a copy path $\mathcal{P}'_i\leftarrow \mathcal{P}_i$\;
Reset $\mathcal{P}_i\leftarrow\varnothing$ then $\mathcal{P}_i\leftarrow P_s$\;

\tcc*[h]{Remove redundant nodes}\\
\While{$P_{i,\text{tail}}\neq P_{ig}$}
{
    \ForEach{$P_{ij}\in\mathcal{P}'_i$}
    {
        \If{$\overrightarrow{P_{i,\text{tail}}P_{ij}}$ is collision-free and satisfy constraint \eqref{eqn:gamma}}
        {
            $\mathcal{P}_i\leftarrow P_{ij}$\;
            break\;
        }
    }
}
\caption{Pseudo code of the node reduction mechanism}
\label{alg:remove_node}
\end{algorithm}

\subsection{Path smoothing using Bezier curves}\label{sec:spline}
The paths generated by RTT include a set of interconnected line segments that need to be smoothed for the UAVs to follow. We conduct this using an interpolation technique in which a Bezier curve is fitted to each node of the flight path. First, a safe zone is defined at each node as follows.

\begin{definition}
A safe zone is a circular area around a node whose radius is the distance from that node to its nearest obstacle.
\label{def:free_zone}
\end{definition}

\begin{figure}
    \centering
    \includegraphics[width=0.4\textwidth]{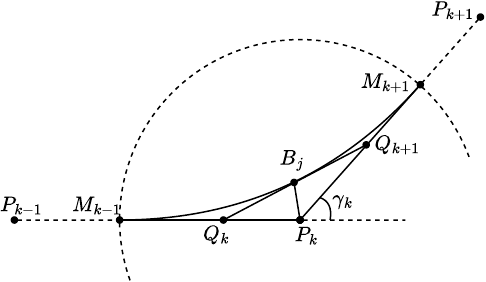}
    \caption{The safe zone and its intersection with path segments}
    \label{fig:bezier}
\end{figure}

Consider the safe zone at node $P_k$ with radius $R_k$ that intersects path segments $P_{k-1}P_{k}$ and $P_{k}P_{k+1}$ at $M_{k-1}$ and $M_{k+1}$, respectively, as depicted in Figure~\ref{fig:bezier}. Denote $M_{k} \equiv P_k$, the two first-order curves $H_1(\tau)$ and $H_2(\tau)$ can be created from $M_{k-1}$, $M_k$, and $M_{k+1}$ as follows

\begin{equation}
\begin{aligned}
    H_{1}\left(\tau\right)&=M_{k-1}+\tau\left(M_{k}-M_{k-1}\right) \quad 0\leq \tau\leq 1\\
    H_{2}\left(\tau\right)&=M_{k}+\tau\left(M_{k+1}-M_{k}\right) \quad \quad0\leq \tau\leq 1 .
\end{aligned}
    \label{eqn:H}
\end{equation}
The quadratic Bezier curve $B_k(\tau)$ is then created from $H_1(\tau)$ and $H_2(\tau)$ as
\begin{equation}
\begin{aligned}
B_k\left(\tau\right) &= \tau H_{1}\left(\tau\right) + (1-\tau)H_{2}\left(\tau\right)\\
&= (1-\tau)^2M_{k-1}+2(1-\tau)\tau M_{k}+\tau^2M_{k+1}.
\end{aligned}
\label{eqn:bezier}
\end{equation}

\begin{theorem} \label{the:1}
The second order Bezier curve $B_k\left(\tau\right)$ defined in \eqref{eqn:bezier} always lies within the safe zone at point $P_k$. That is, the distance from any point $B_j$ on $B_k\left(\tau\right)$ to $P_k$ is always less than or equal to the radius of the safe zone at $P_k$:
\begin{equation}
\left\Vert P_{k}B_j\right\Vert \leq R_k \quad \forall B_j \in B_k(\tau)
\end{equation}
\end{theorem}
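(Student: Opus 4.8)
The plan is to exploit the special structure of the three control points defining $B_k(\tau)$. The crucial observation is that the middle control point coincides with the center of the safe zone, $M_k \equiv P_k$, while the two outer control points $M_{k-1}$ and $M_{k+1}$ lie exactly on the boundary circle by construction in Figure~\ref{fig:bezier}. Consequently, placing $P_k$ at the origin gives $\left\Vert P_k M_{k-1}\right\Vert = \left\Vert P_k M_{k+1}\right\Vert = R_k$ by Definition~\ref{def:free_zone}, together with $M_k = 0$. The middle term $2(1-\tau)\tau M_k$ in \eqref{eqn:bezier} then vanishes and the curve reduces to $B_k(\tau) = (1-\tau)^2 M_{k-1} + \tau^2 M_{k+1}$.

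First I would bound the distance from the origin (i.e. from $P_k$) to an arbitrary curve point. Since the coefficients $(1-\tau)^2$ and $\tau^2$ are nonnegative on $[0,1]$, the triangle inequality and absolute homogeneity of the norm yield $\left\Vert B_k(\tau)\right\Vert \le (1-\tau)^2 \left\Vert M_{k-1}\right\Vert + \tau^2 \left\Vert M_{k+1}\right\Vert = R_k\left[(1-\tau)^2 + \tau^2\right]$. The whole statement thus collapses to the scalar inequality $(1-\tau)^2 + \tau^2 \le 1$ for $\tau \in [0,1]$.

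To close the argument I would verify this scalar bound directly: the function $g(\tau) = (1-\tau)^2 + \tau^2 = 2\tau^2 - 2\tau + 1$ is a convex parabola whose derivative vanishes only at $\tau = 1/2$, so on the compact interval $[0,1]$ it attains its maximum at an endpoint, where $g(0) = g(1) = 1$. Hence $g(\tau) \le 1$ throughout, which gives $\left\Vert P_k B_j\right\Vert \le R_k$ for every $B_j$ on $B_k(\tau)$, exactly as claimed.

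I do not expect any serious obstacle here; the only points requiring care are the geometric facts that $M_{k-1}$ and $M_{k+1}$ sit at distance precisely $R_k$ from $P_k$ and that $M_k = P_k$ cancels the cross term. As an even shorter alternative I could invoke the convex-hull property of Bezier curves: $B_k(\tau)$ always lies in the convex hull of its control points $\{M_{k-1}, M_k, M_{k+1}\}$, and since all three belong to the closed disk of radius $R_k$ about $P_k$ — a convex set — the entire curve does as well. I would nevertheless prefer the explicit triangle-inequality computation, as it produces the quantitative bound and matches the elementary geometric style used elsewhere in the paper.
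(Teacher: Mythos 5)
Your proof is correct, but it takes a genuinely different route from the paper. The paper argues geometrically: it takes the tangent line to $B_k(\tau)$ at $B_j$, lets $Q_k$ and $Q_{k+1}$ be its intersections with the legs $P_{k-1}P_k$ and $P_kP_{k+1}$ (these are exactly the de Casteljau points $H_1(\tau)$ and $H_2(\tau)$ from \eqref{eqn:H}), and then bounds $\left\Vert P_kB_j\right\Vert$ by $\max\left\{\left\Vert P_kQ_k\right\Vert,\left\Vert P_kQ_{k+1}\right\Vert\right\}$ using the fact that $B_j$ lies on the side $Q_kQ_{k+1}$ of triangle $\triangle Q_kP_kQ_{k+1}$; since $Q_k\in[M_{k-1},P_k]$ and $Q_{k+1}\in[P_k,M_{k+1}]$, both of these lengths are at most $R_k$. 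You instead exploit the algebraic structure: translating so that $M_k\equiv P_k$ is the origin kills the cross term in \eqref{eqn:bezier}, and the triangle inequality reduces everything to the scalar bound $(1-\tau)^2+\tau^2\leq 1$ on $[0,1]$. Your version buys a sharper quantitative estimate, $\left\Vert P_kB_j\right\Vert\leq R_k\left(1-2\tau(1-\tau)\right)$, and avoids the geometric facts the paper leaves implicit (that the tangent at $B_j$ really does meet both legs, with $B_j$ between the intersection points); your convex-hull remark is also a valid one-line alternative, since the closed disk of radius $R_k$ about $P_k$ is convex and contains all three control points. The paper's argument, on the other hand, stays closer to the tangent-line picture in Figure~\ref{fig:bezier} and reuses the $H_1$, $H_2$ construction already introduced for the curve. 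Both are sound; no gap in yours.
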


\begin{proof}
Consider point $B_j \in B_k(\tau)$ and denote $Q_{k}$ and $Q_{k+1}$ respectively as the intersections of the tangent line of $B_k(\tau)$ at that point with $P_{k-1}P_k$ and $P_{k}P_{k+1}$, as depicted in Figure~\ref{fig:bezier}. From triangle $\triangle Q_kP_kQ_{k+1}$, we have
\begin{equation}
\left\Vert P_{k}B_j\right\Vert \leq \max\left\{\left\Vert P_{k}Q_k\right\Vert,\left\Vert P_{k}Q_{k+1}\right\Vert\right\}.
\label{eq:dist1}
\end{equation}

Since $Q_k \in [M_{k-1},P_{k}]$ and $Q_{k+1} \in [P_{k},M_{k+1}]$, we get
\begin{equation}
    \begin{aligned}
    &\left\Vert P_kQ_{k+1}\right\Vert \leq \left\Vert P_{k}M_{k+1}\right\Vert\\
    &\left\Vert P_kQ_k\right\Vert \leq \left\Vert P_{k}M_{k-1}\right\Vert
    \end{aligned}
    \label{eq:dist2}
\end{equation}

According to (\ref{eq:dist1}) and (\ref{eq:dist2}), and note that $\left\Vert P_{k}M_{k-1}\right\Vert = \left\Vert P_{k}M_{k+1}\right\Vert = R_k$, we have
\begin{equation}
\left\Vert P_{k}B_j\right\Vert \leq R_k \quad \forall B_j \in B_k(\tau).
\label{eq:dist3}
\end{equation}

\end{proof}

\begin{figure}
    \centering
    \includegraphics[width=0.48\textwidth]{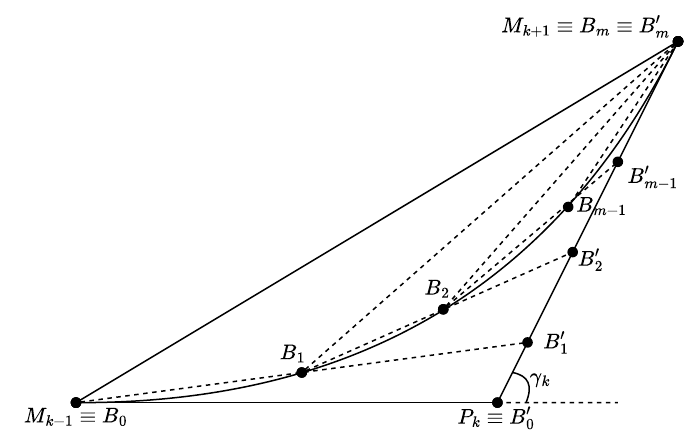}
    \caption{Turning angles at certain points on the
interpolated Bezier curve}
    \label{fig:bezier_dem}
\end{figure}

Next, we prove that the turning angle at any point on the interpolated Bezier curve meets the dynamic constraint. Let $B=\{B_0, B_1, ..., B_m\}$ be a set of points on $B_k(\tau)$ so that $B_0 \equiv M_{k-1}$ and $B_m \equiv M_{k+1}$, as illustrated in Figure~\ref{fig:bezier_dem}. Denote $B^\prime_0 \equiv P_k$ and $B^\prime_j$ as the intersection of $B_0B_j$ and $P_kB_m$. At $B_j$, the turning angle of the UAV flying along the interpolated curve is $\angle B_{j+1}B_jB'_{j}$. 

\begin{theorem}\label{the:2}
The curve $B\left(\tau\right)$ always satisfies the turning angle constraint, i.e.,
\begin{equation}
     \angle B_{j+1}B_jB'_{j} < \gamma_{k} \leq \gamma_\text{max} \quad \forall j \in \left\{0,...,m-1\right\}
\end{equation}
\end{theorem}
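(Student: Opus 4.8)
The plan is to prove the two inequalities separately. The right-hand inequality $\gamma_k \le \gamma_\text{max}$ is immediate: $P_k$ is a retained vertex of the planned path, and both the tree expansion (Algorithm~\ref{alg:rrt}) and the node-reduction step (Algorithm~\ref{alg:remove_node}) only accept a vertex whose turning angle satisfies constraint~\eqref{eqn:gamma}. So the whole burden is the strict left-hand inequality $\angle B_{j+1}B_jB'_j < \gamma_k$, and for that I would exploit the special structure of the quadratic Bezier defined in~\eqref{eqn:bezier}.

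First I would differentiate \eqref{eqn:bezier} to obtain the hodograph $B_k'(\tau)=2(1-\tau)\overrightarrow{M_{k-1}P_k}+2\tau\overrightarrow{P_kM_{k+1}}$, which is an affine function of $\tau$. Two facts follow at once: (i) the endpoint tangents point along $\overrightarrow{M_{k-1}P_k}$ and $\overrightarrow{P_kM_{k+1}}$, whose directions coincide with those of the incoming and outgoing path segments at $P_k$, so the total turning of the arc from $B_0$ to $B_m$ equals the corner angle $\gamma_k$; and (ii) because $B_k'(\tau)$ traverses a straight segment in velocity space, its direction rotates monotonically, i.e. the arc is convex with no inflection. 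Hence the tangent direction sweeps the angle $\gamma_k$ monotonically as $\tau$ runs from $0$ to $1$, and any partial sweep is strictly smaller than $\gamma_k$.

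Next I would translate the discrete quantity $\angle B_{j+1}B_jB'_j$ into such a partial sweep using the auxiliary triangle $\triangle B_0P_kB'_j$ of Figure~\ref{fig:bezier_dem}. Because $B'_j$ lies on ray $\overrightarrow{P_kM_{k+1}}$ and $B_0\equiv M_{k-1}$, the apex angle $\angle B_0P_kB'_j$ equals the interior corner angle $\pi-\gamma_k$, so the two base angles sum to exactly $\gamma_k$; in particular each base angle is strictly positive and strictly less than $\gamma_k$. Since $B'_j$ lies on line $B_0B_j$, the side $\overrightarrow{B_jB'_j}$ is collinear with the chord $\overrightarrow{B_0B_j}$, whose direction, by convexity, lies inside the fan of tangent directions between $\tau=0$ and the parameter of $B_j$, while the edge $\overrightarrow{B_jB_{j+1}}$ has direction inside the fan between $B_j$ and $B_{j+1}$. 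Therefore $\angle B_{j+1}B_jB'_j$ is bounded by the tangent rotation accumulated between these parameters, which is a proper sub-arc of the total turning and hence strictly less than $\gamma_k$. Combining this with $\gamma_k\le\gamma_\text{max}$ closes the argument.

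The main obstacle I anticipate is this middle step: rigorously showing that the chord direction $\overrightarrow{B_0B_j}$ and the edge direction $\overrightarrow{B_jB_{j+1}}$ both fall inside the monotone fan of tangent directions, so that the measured angle is genuinely a partial sub-arc rather than possibly exceeding $\gamma_k$ near the endpoints. This requires invoking convexity carefully, e.g. a secant/mean-value argument that the chord of a convex arc has direction lying between the tangent directions at its endpoints, and checking the degenerate cases $j=0$ (where $B'_0\equiv P_k$) and $j=m-1$. The strictness in particular hinges on $P_k$ being a control point not on the curve, so that $\triangle B_0P_kB'_j$ is non-degenerate and its base angles are genuinely positive.
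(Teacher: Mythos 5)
Your proposal is correct in strategy but takes a genuinely different route from the paper. The paper's proof is purely synthetic: working with the sampled points $B_j$ and the auxiliary points $B'_j$ on segment $P_kB_m$, it applies the angle-sum identity to triangle $\triangle B'_{j+1}B'_jB_j$ to obtain $\angle B_{j+1}B_jB'_j < \pi - \angle B'_{j+1}B'_jB_j$, then uses the exterior-angle theorem on successive triangles to show that the quantities $\pi - \angle B'_{j+1}B'_jB_j$ form a strictly decreasing chain whose largest member is $\pi - \angle B'_1B'_0B_0 = \gamma_k$ (since $B'_0 \equiv P_k$ and $B_0 \equiv M_{k-1}$, that angle is the interior corner angle $\pi-\gamma_k$). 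You instead exploit the differential structure of the quadratic Bezier: the hodograph $B_k'(\tau)$ traverses a straight segment in velocity space from $2\overrightarrow{M_{k-1}P_k}$ to $2\overrightarrow{P_kM_{k+1}}$, so the tangent direction sweeps monotonically through exactly $\gamma_k$, every chord direction is an intermediate tangent direction, and the measured angle is a proper sub-arc of the total sweep. Both routes reach the same strict bound, and you additionally make explicit the right-hand inequality $\gamma_k \le \gamma_\text{max}$ via constraint \eqref{eqn:gamma}, which the paper leaves implicit. What each buys: the paper's telescoping argument is elementary and needs no calculus, but it relies on incidence relations among the $B_j$ and $B'_j$ read off Figure~\ref{fig:bezier_dem}; your hodograph argument is independent of the particular sampling $\{B_0,\dots,B_m\}$ and makes the source of strictness transparent (it degenerates only when $\gamma_k=0$, where no smoothing is needed), at the cost of having to supply the lemma you flag yourself — that the chord of a convex arc has direction strictly between the endpoint tangent directions. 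For a quadratic Bezier this is routine (Cauchy's mean value theorem applied componentwise, or integrating the unit tangent against the speed), so the gap you identify is fillable and your proof goes through; just also confirm, as you note, that $B'_j$ lies beyond $B_j$ on ray $\overrightarrow{B_0B_j}$ so that $\overrightarrow{B_jB'_j}$ really has the chord direction, which follows from the curve lying in the interior of the control triangle $M_{k-1}P_kM_{k+1}$.
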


\begin{proof}
For all $ j \in \left\{0,...,m-1\right\}$, consider triangle $\triangle B'_{j+1}B'_jB_{j}$, we have
\begin{equation}
   \angle{B'_{j+1}B_jB'_{j}} + \angle{B_jB'_{j+1}B'_{j}} + \angle{B'_{j+1}B'_{j}B_j} = \pi.
\end{equation}
Hence,
\begin{equation}
    \angle{B_{j+1}B_jB'_{j}}= \angle{B'_{j+1}B_jB'_{j}}<\pi-\angle{B'_{j+1}B'_{j}B_j}.
    \label{eqn:UAVbezier1}
\end{equation}
On the other hand, consider triangle $\triangle B'_jB'_{j=1}B_{j-1}$, we have
\begin{equation}
    \angle{B'_{j+1}B'_jB_j} = \angle{B'_{j}B'_{j-1}B_{j-1}} + \angle{B'_{j}B_{j-1}B'_{j-1}}.
\end{equation}
and hence 
\begin{equation}
    \angle{B'_{j+1}B'_jB_j} > \angle{B'_{j}B'_{j-1}B_{j-1}}.
\end{equation}
As a result, we get
\begin{equation}  
     \pi- \angle{B'_{j+1}B'_jB_{j}} < \pi- \angle{B'_{j}B'_{j-1}B_{j-1}}.
     \label{eqn:UAVbezier2}
\end{equation}
Applying \eqref{eqn:UAVbezier2} for $\forall j \in \{1,...,m-1\}$ gives
\begin{equation} \label{eqn:UAVbezier3}
\begin{aligned}
    \pi- \angle{B'_{m}B'_{m-1}B_{m-1}} &<...<\pi- \angle{B'_{j+1}B'_jB_{j}} \\
    &<...< \pi- \angle{B'_{1}B'_{0}B_{0}} = \gamma_k.
\end{aligned}
\end{equation}
From \eqref{eqn:UAVbezier1} and \eqref{eqn:UAVbezier3}, we obtain
\begin{equation}
    \angle{B_{j+1}B_jB'_{j}} < \gamma_k.
\end{equation}
\end{proof}

\begin{remark}
With Theorem~\ref{the:1} and Theorem~\ref{the:2}, we prove that our path smoothing using second-order Bezier curves not only satisfies the UAV's dynamic constraint but also ensures sufficient distance from obstacles for its safe operation. 
\end{remark}

\begin{remark}
When operating at a certain forward velocity $v_{x}$, the energy consumption of the UAV is proportional to its travel distance. Since MultiRRT minimizes the path length, it reduces energy consumption.
\end{remark}

\section{Results and Discussion}\label{sec:results}
To evaluate the performance of the proposed algorithm, we have conducted a number of comparisons and experiments with details as follows.

\subsection{Scenario setup}
    The UAVs used in simulations and experiments are quad-rotors named 3DR Solo\footnote{Documentation of the 3DR Solo drone is available \href{https://www.drones.nl/media/files/drones/1456527966-3dr-solo-v8-02-05-16.pdf}{\tt{here}}.}, as shown in Figure~\ref{fig:exper}. Their parameters are chosen as in Table~\ref{tbl:params}, which results in the maximum turning angle of $\gamma_\text{max}=75^o$. The tree-expansion distance $\delta$ is chosen as 50, and the maximum number of iterations is set to 5000.
\begin{table}
\centering
\caption{UAV parameters}
\label{tbl:params}
\begin{tabular}{p{1.2cm}p{1.8cm}p{4.5cm}}
\hline \hline
\textbf{Notation}         & \textbf{Value}       & \textbf{Meaning}  \\ 
\hline \hline
$m$         & 1.50       & Mass of the UAV (kg)                                    \\ \hline
$g$         & 9.81       & Gravity acceleration (m/s$^2$)    \\ \hline     $v_x$        & 8.0      & Velocity in the x direction  (m/s)               \\ \hline                   
$\Omega_\text{max}$       & 1000       & Angular velocities of the propellers                                   \\ \hline
$c_t$       & $2.9\times10^{-5}$     & Thrust coefficient                                     \\ \hline
$c_f$       & $1.1\times10^{-6}$       & Friction coefficient                                    \\ \hline \hline
\end{tabular}
\end{table}

\begin{figure*}
    \centering
    \begin{subfigure}[b]{0.245\textwidth}
    \includegraphics[width=\textwidth]{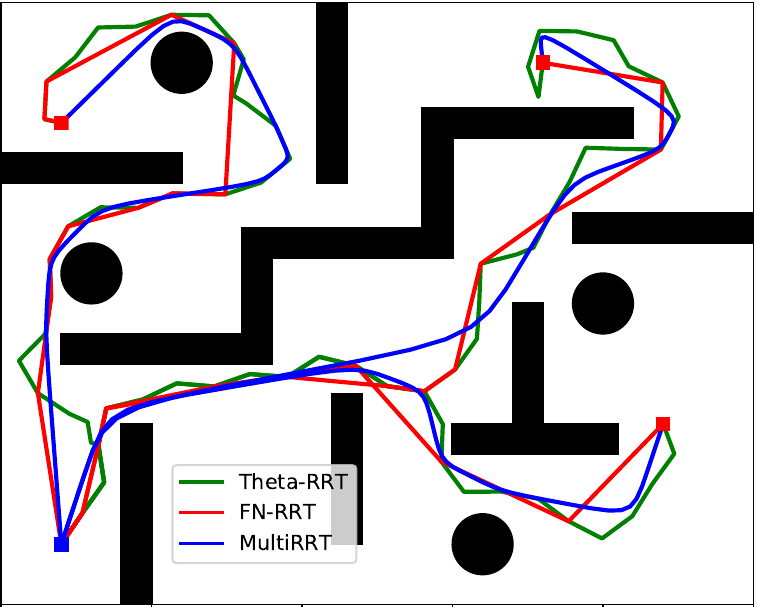}
    \caption{Scenario 1}
    % \label{fig:enter-label}
    \end{subfigure}
    \begin{subfigure}[b]{0.245\textwidth}
    \includegraphics[width=\textwidth]{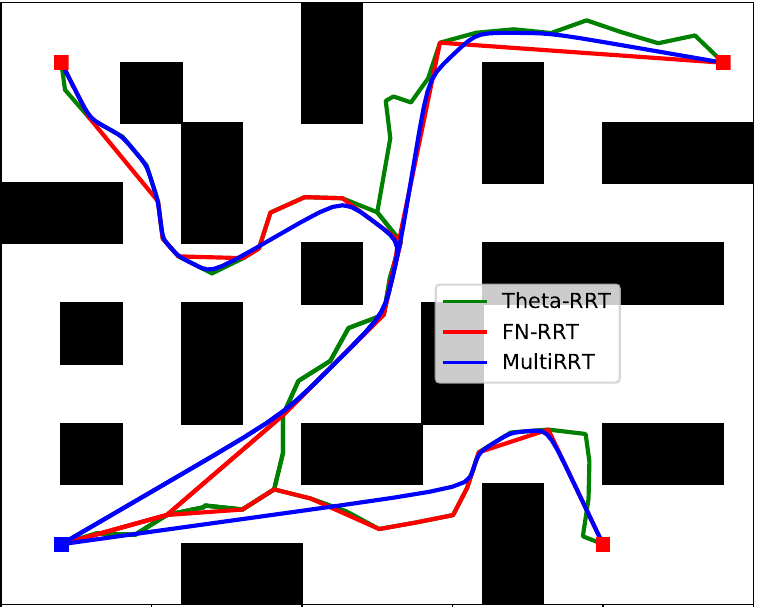}
    \caption{Scenario 2}
    % \label{fig:enter-label}
    \end{subfigure}
    \begin{subfigure}[b]{0.245\textwidth}
    \includegraphics[width=\textwidth]{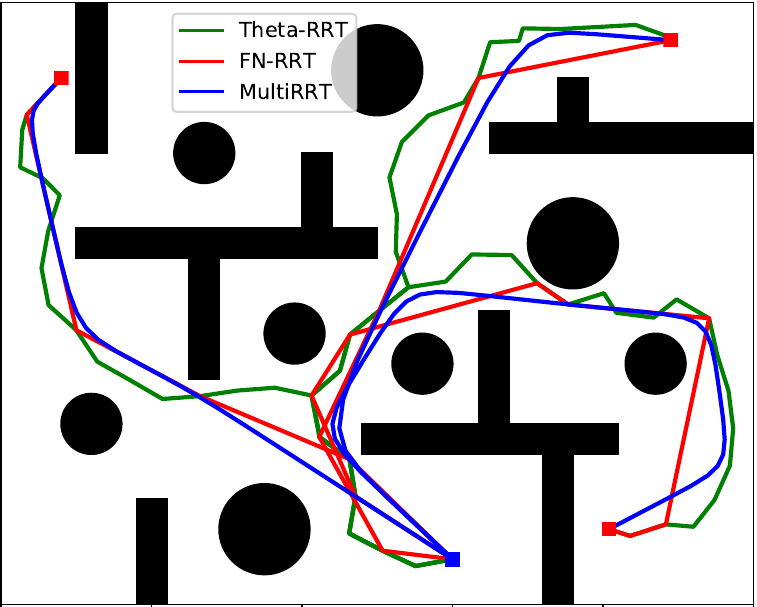}
    \caption{Scenario 3}
    % \label{fig:enter-label}
    \end{subfigure}
    \begin{subfigure}[b]{0.245\textwidth}
    \includegraphics[width=\textwidth]{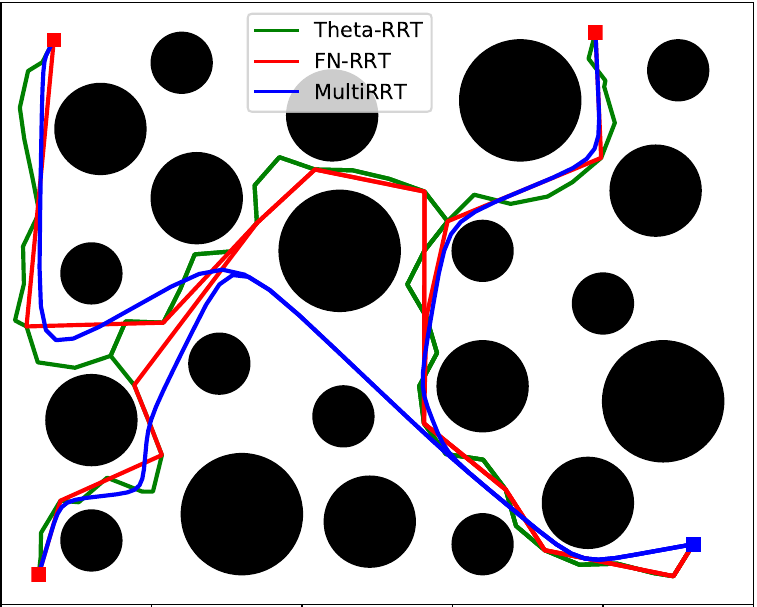}
    \caption{Scenario 4}
    % \label{fig:enter-label}
    \end{subfigure}
    \caption{The paths generated by the RRT algorithms in four scenarios}
    \label{fig:reduce_path}
\end{figure*}

In evaluation, four scenarios are generated with different complexity levels, as shown in Figure~\ref{fig:reduce_path}. Three metrics are used to evaluate the algorithms with definition as follows~\cite{Phung2021,PuenteCastro2022,9078055}.
\begin{enumerate}[label=\roman*.]
\item \textit{Path length:} The path length is defined as the average length of all generated paths, i.e., 
\begin{equation}
    F_L = \dfrac{1}{n}\sum_{i=1}^n\sum_{j=1}^{N_i}{\left\Vert P_{ij}P_{i,j+1}\right\Vert} .
\end{equation}

\item \textit{Smooth score:} The smoothness score, $F_S$, is determined based on the angles between consecutive flight path segments, referred to as the turning angles $\gamma$, and is calculated as follows:

\begin{equation}
    F_S=\dfrac{1}{n(N_i-1)}\sum_{i=1}^n\sum_{j=2}^{N_i}\gamma_{ij}.
\end{equation}
The smaller the angle values, the smoother the flight paths, and vice versa. Thus, a small smooth score implies a good path as less turning operation is needed.

\item \textit{Computation time:} Computation time, $F_T$, is the time required for the algorithm to generate paths to all the targets.
\end{enumerate}

In our comparison, each algorithm is executed 10 times for each scenario. The results are then presented by box plots that include the minimum, maximum, average, and standard deviation of the obtained data.

\subsection{Node reduction evaluation}

\begin{figure}
\centering
    \centering
    \includegraphics[width=0.49\textwidth]{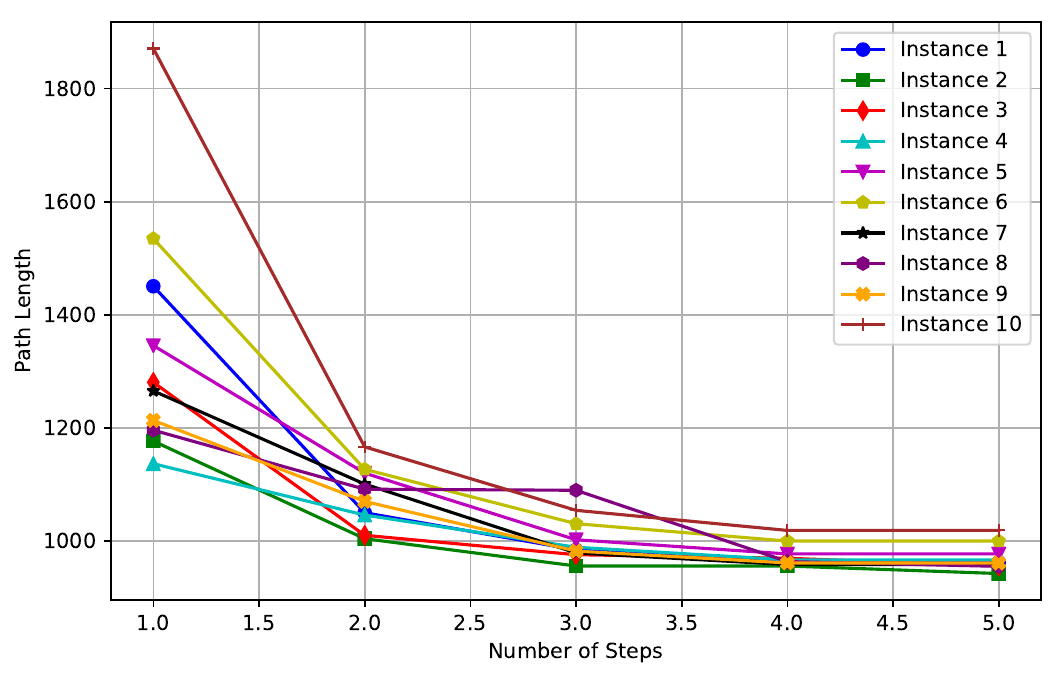}
    \caption{Reduction in length of ten path instances during the node reduction process}
    \label{fig:reduce_length_step}
\end{figure}

\begin{figure*}
    \centering
    \begin{subfigure}[b]{0.22\textwidth}
    \includegraphics[width=\textwidth]{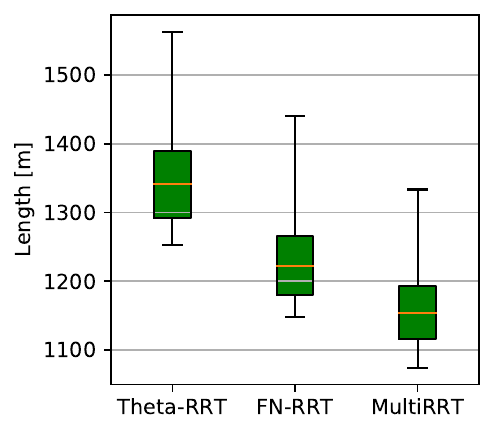}
    \caption{Scenario 1}
    % \label{fig:enter-label}
    \end{subfigure}
    \begin{subfigure}[b]{0.22\textwidth}
    \includegraphics[width=\textwidth]{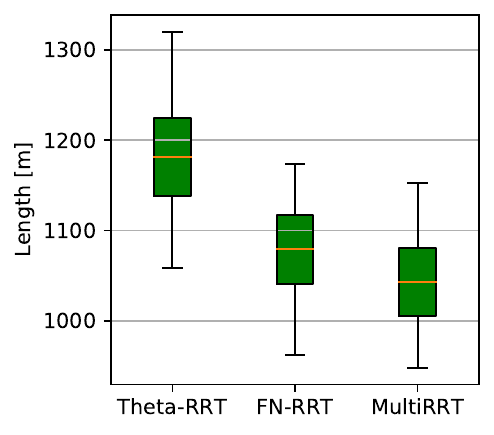}
    \caption{Scenario 2}
    % \label{fig:enter-label}
    \end{subfigure}
    \begin{subfigure}[b]{0.22\textwidth}
    \includegraphics[width=\textwidth]{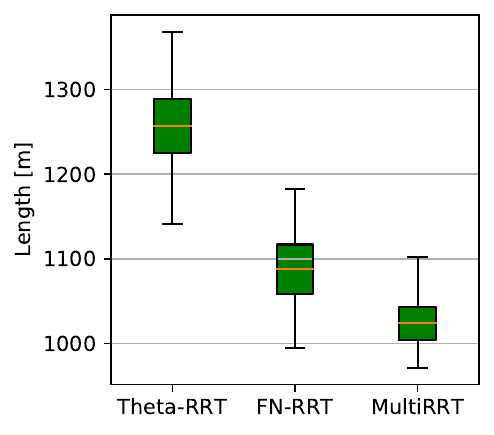}
    \caption{Scenario 3}
    % \label{fig:enter-label}
    \end{subfigure}
    \begin{subfigure}[b]{0.22\textwidth}
    \includegraphics[width=\textwidth]{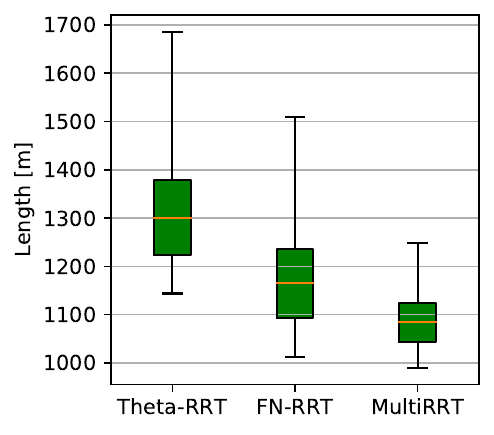}
    \caption{Scenario 4}
    % \label{fig:enter-label}
    \end{subfigure}
    \caption{The path length $F_L$ of the comparing algorithms}
    \label{fig:reduce_length}
\end{figure*}

\begin{figure*}
    \centering
    \begin{subfigure}[b]{0.22\textwidth}
    \includegraphics[width=\textwidth]{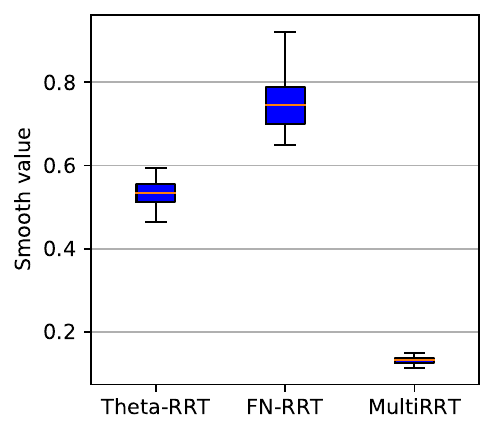}
    \caption{Scenario 1}
    % \label{fig:enter-label}
    \end{subfigure}
    \begin{subfigure}[b]{0.22\textwidth}
    \includegraphics[width=\textwidth]{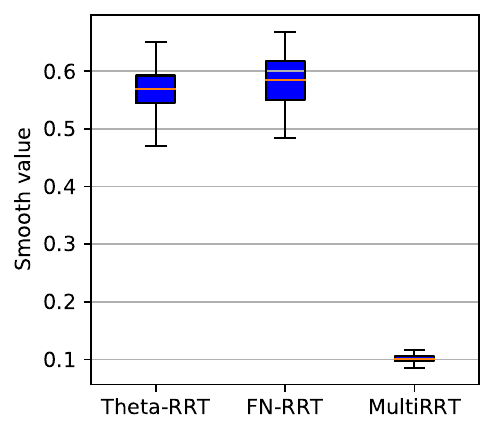}
    \caption{Scenario 2}
    % \label{fig:enter-label}
    \end{subfigure}
    \begin{subfigure}[b]{0.22\textwidth}
    \includegraphics[width=\textwidth]{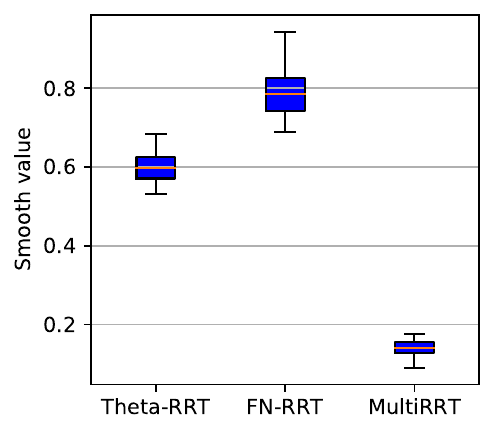}
    \caption{Scenario 3}
    % \label{fig:enter-label}
    \end{subfigure}
    \begin{subfigure}[b]{0.22\textwidth}
    \includegraphics[width=\textwidth]{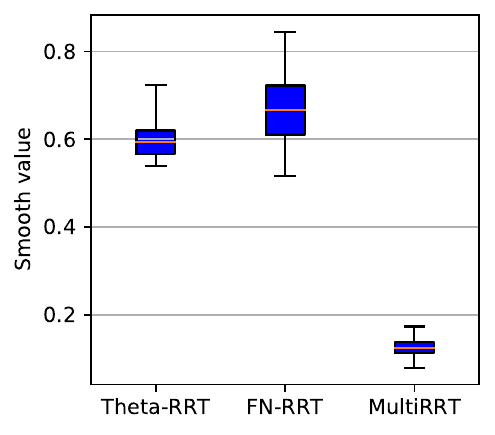}
    \caption{Scenario 4}
    \end{subfigure}
    \caption{The smooth score $F_S$ of the comparing algorithms}
    \label{fig:reduce_smooth}
\end{figure*}

\begin{figure*}
    \centering
    \begin{subfigure}[b]{0.22\textwidth}
    \includegraphics[width=\textwidth]{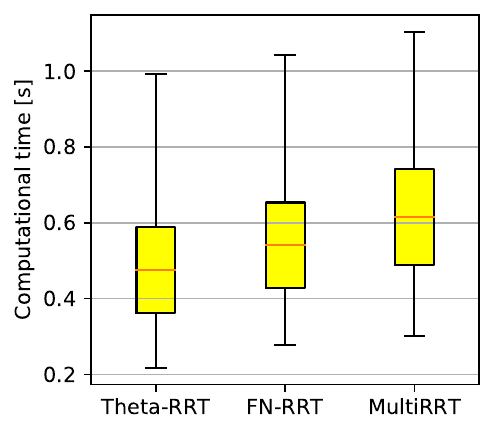}
    \caption{Scenario 1}
    % \label{fig:enter-label}
    \end{subfigure}
    \begin{subfigure}[b]{0.22\textwidth}
    \includegraphics[width=\textwidth]{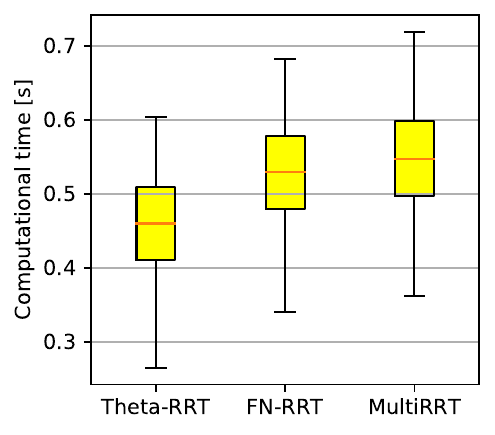}
    \caption{Scenario 2}
    % \label{fig:enter-label}
    \end{subfigure}
    \begin{subfigure}[b]{0.22\textwidth}
    \includegraphics[width=\textwidth]{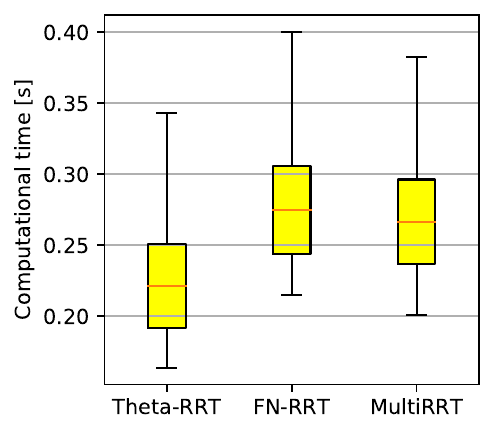}
    \caption{Scenario 3}
    % \label{fig:enter-label}
    \end{subfigure}
    \begin{subfigure}[b]{0.22\textwidth}
    \includegraphics[width=\textwidth]{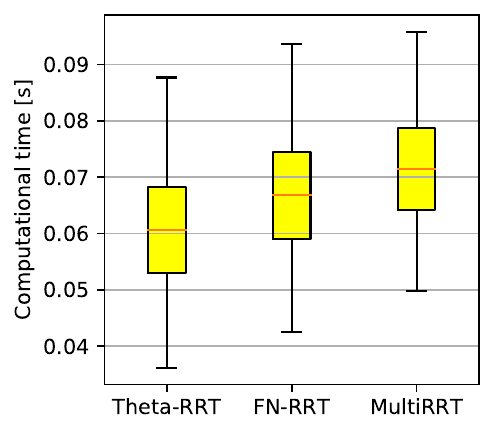}
    \caption{Scenario 4}
    \end{subfigure}
    \caption{The execution time $F_T$ of the comparing algorithms}
    \label{fig:reduce_time}
\end{figure*}

\begin{figure*}[!]
    \centering
    \begin{subfigure}[b]{0.245\textwidth}
    \includegraphics[width=\textwidth]{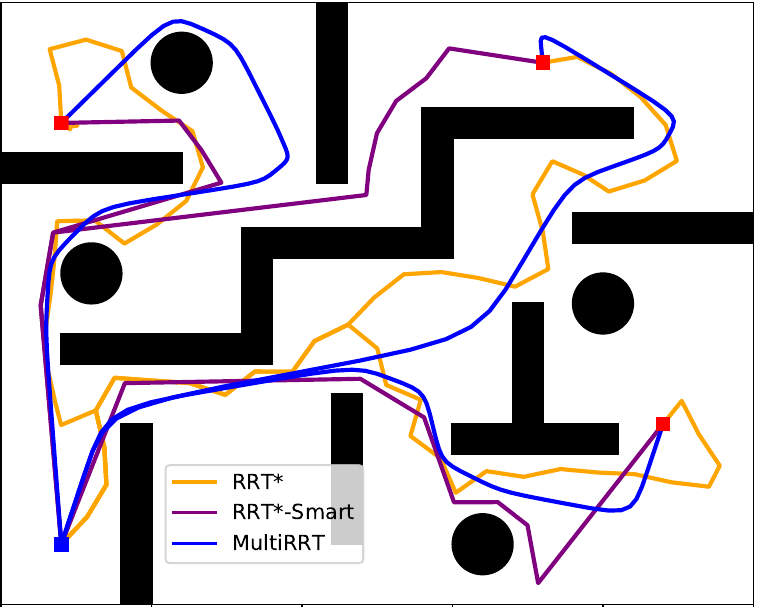}
    \caption{Scenario 1}
    % \label{fig:enter-label}
    \end{subfigure}
    \begin{subfigure}[b]{0.245\textwidth}
    \includegraphics[width=\textwidth]{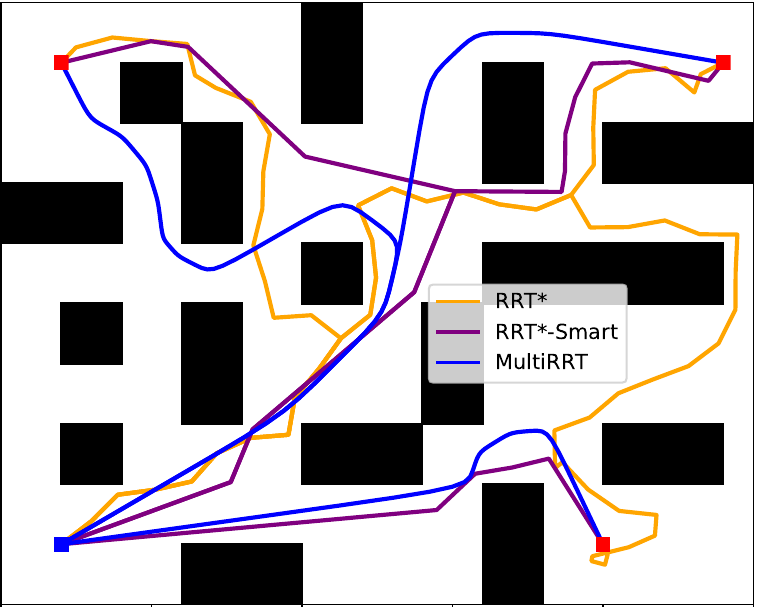}
    \caption{Scenario 2}
    % \label{fig:enter-label}
    \end{subfigure}
    \begin{subfigure}[b]{0.245\textwidth}
    \includegraphics[width=\textwidth]{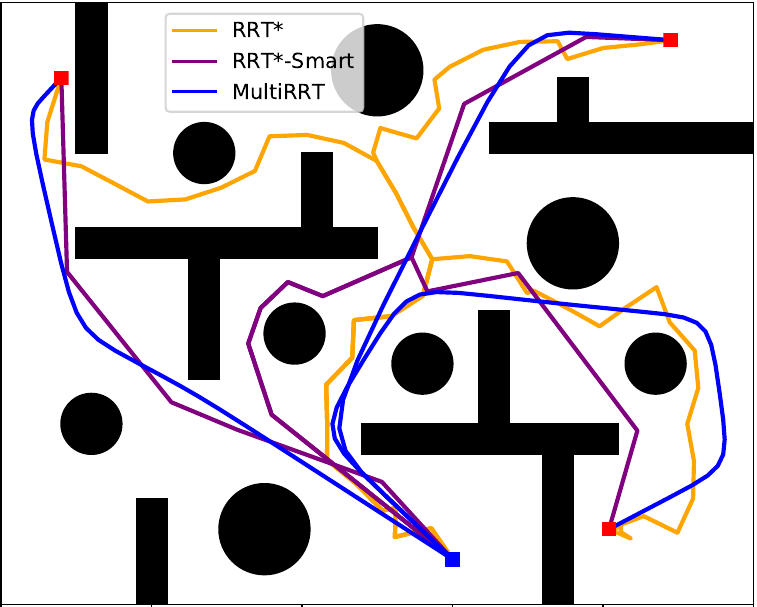}
    \caption{Scenario 3}
    % \label{fig:enter-label}
    \end{subfigure}
    \begin{subfigure}[b]{0.245\textwidth}
    \includegraphics[width=\textwidth]{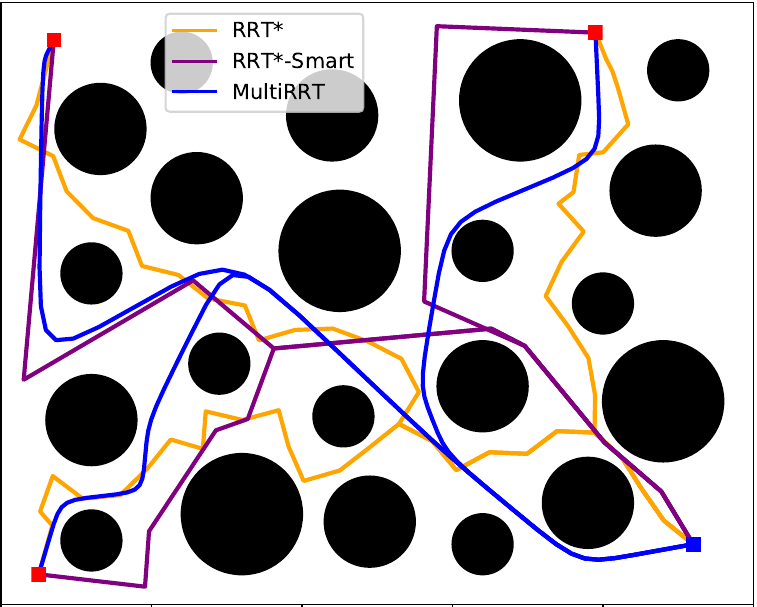}
    \caption{Scenario 4}
    % \label{fig:enter-label}
    \end{subfigure}
    \caption{The paths generated by the RRT algorithms}
    \label{fig:rrt_path}
\end{figure*}

\begin{figure*}[!h]
    \centering
    \begin{subfigure}[b]{0.22\textwidth}
    \includegraphics[width=\textwidth]{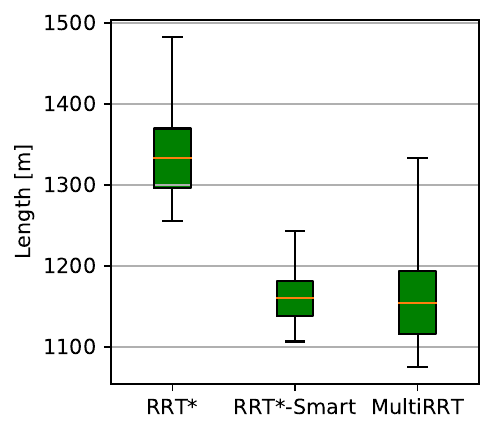}
    \caption{Scenario 1}
    % \label{fig:enter-label}
    \end{subfigure}
    \begin{subfigure}[b]{0.22\textwidth}
    \includegraphics[width=\textwidth]{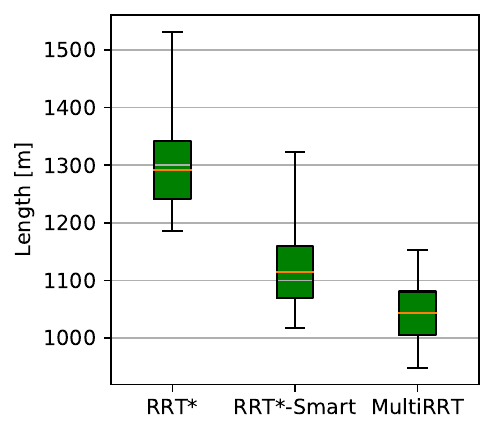}
    \caption{Scenario 2}
    % \label{fig:enter-label}
    \end{subfigure}
    \begin{subfigure}[b]{0.22\textwidth}
    \includegraphics[width=\textwidth]{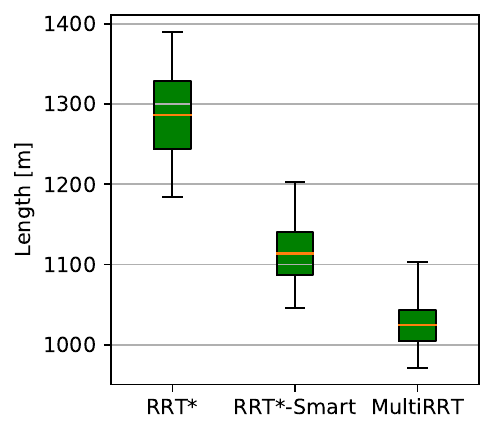}
    \caption{Scenario 3}
    % \label{fig:enter-label}
    \end{subfigure}
    \begin{subfigure}[b]{0.22\textwidth}
    \includegraphics[width=\textwidth]{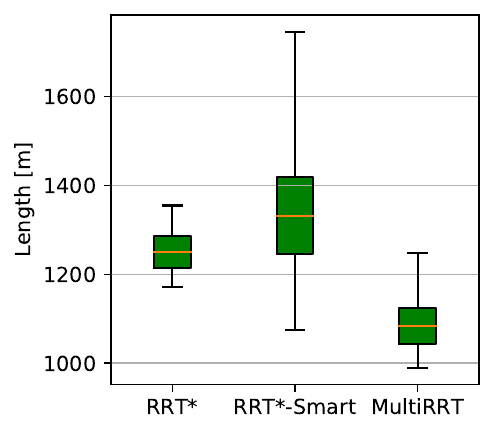}
    \caption{Scenario 4}
    % \label{fig:enter-label}
    \end{subfigure}
    \caption{The path length $F_L$ of the RRT algorithms}
    \label{fig:rrt_length}
\end{figure*}

\begin{figure*}
    \centering
    \begin{subfigure}[b]{0.22\textwidth}
    \includegraphics[width=\textwidth]{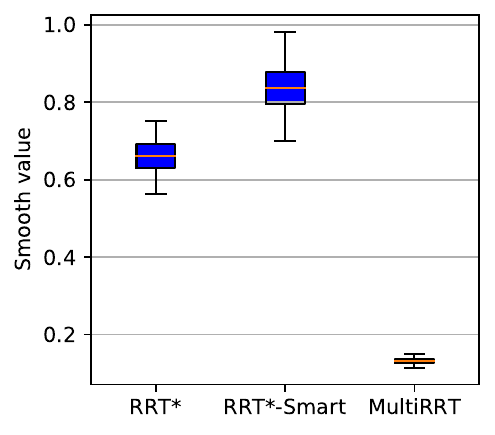}
    \caption{Scenario 1}
    % \label{fig:enter-label}
    \end{subfigure}
    \begin{subfigure}[b]{0.22\textwidth}
    \includegraphics[width=\textwidth]{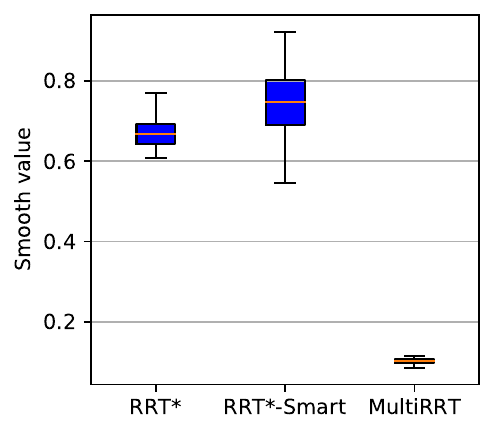}
    \caption{Scenario 2}
    % \label{fig:enter-label}
    \end{subfigure}
    \begin{subfigure}[b]{0.22\textwidth}
    \includegraphics[width=\textwidth]{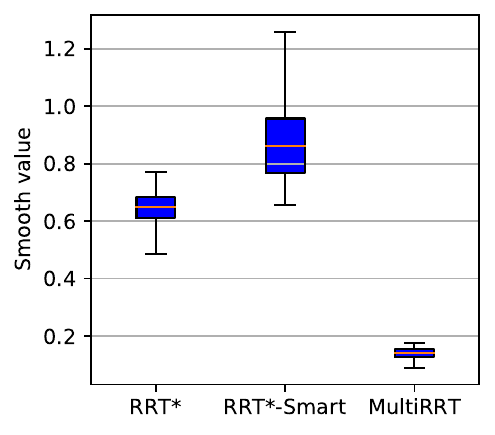}
    \caption{Scenario 3}
    % \label{fig:enter-label}
    \end{subfigure}
    \begin{subfigure}[b]{0.22\textwidth}
    \includegraphics[width=\textwidth]{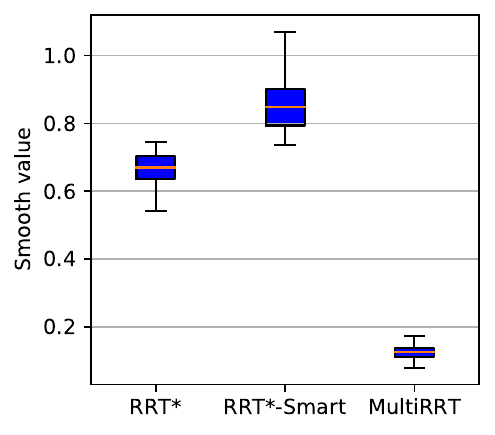}
    \caption{Scenario 4}
    % \label{fig:enter-label}
    \end{subfigure}
    \caption{The smooth score $F_S$ of the RRT algorithms}
    \label{fig:rrt_smooth}
\end{figure*}

\begin{figure*}
    \centering
    \begin{subfigure}[b]{0.22\textwidth}
    \includegraphics[width=\textwidth]{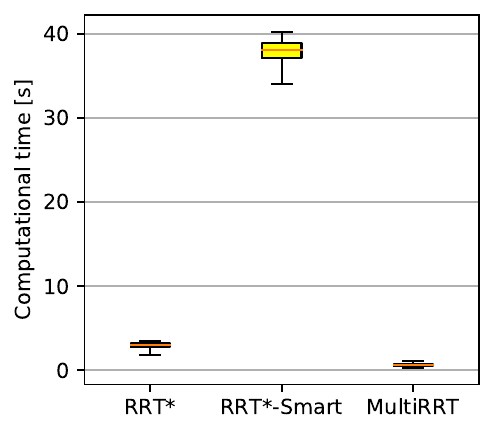}
    \caption{Scenario 1}
    % \label{fig:enter-label}
    \end{subfigure}
    \begin{subfigure}[b]{0.22\textwidth}
    \includegraphics[width=\textwidth]{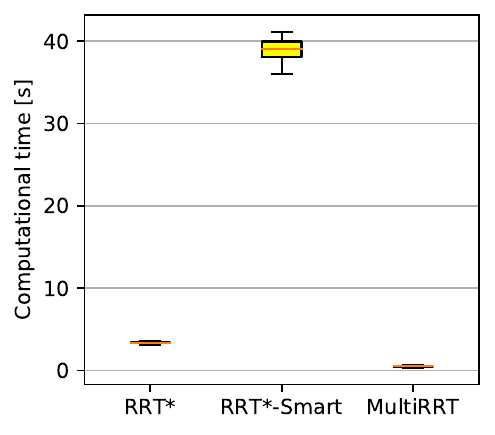}
    \caption{Scenario 2}
    % \label{fig:enter-label}
    \end{subfigure}
    \begin{subfigure}[b]{0.22\textwidth}
    \includegraphics[width=\textwidth]{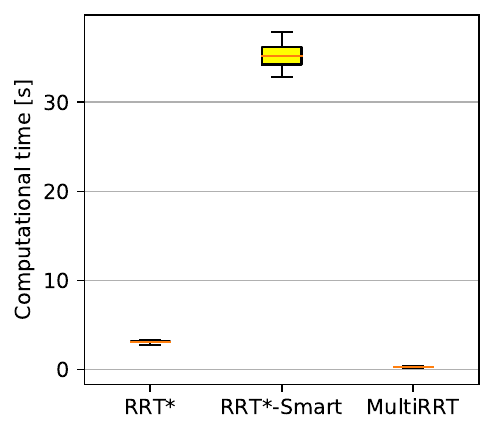}
    \caption{Scenario 3}
    % \label{fig:enter-label}
    \end{subfigure}
    \begin{subfigure}[b]{0.22\textwidth}
    \includegraphics[width=\textwidth]{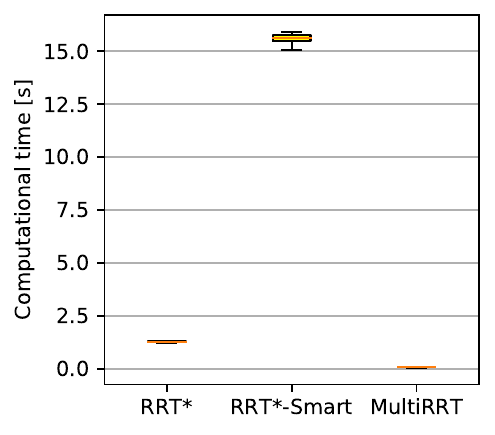}
    \caption{Scenario 4}
    % \label{fig:enter-label}
    \end{subfigure}
    \caption{The execution time of the RRT algorithms}
    \label{fig:rrt_time}
\end{figure*}

In this section, we evaluate the effectiveness of our node reduction mechanism by comparing it with two popular RRT variants, Theta-RRT~\cite{7487439} and FN-RTT~\cite{Qi2023}. The Theta-RRT does not use node reduction but geometric information to optimize the path. The FN-RRT, on the other hand, employs an intensive node reduction mechanism where intermediate nodes are removed until a potential collision is detected.

The paths generated in the four scenarios are shown in Figure~\ref{fig:reduce_path}. It can be seen that all methods can create collision-free paths from the start to the target locations. However, the path length $F_L$ of the proposed method is shortest among all scenarios. This result can be further confirmed in Figure~\ref{fig:reduce_length}, which shows the mean and variance of the path lengths in all scenarios. Thanks to our node reduction mechanism, MultiRRT obtains the smallest mean values for all four scenarios. Its variance is also smaller than other algorithms, which implies a more stable convergence. Figure {\ref{fig:reduce_length_step}} illustrates the convergence process in which the path length reduces in all ten trials as redundant nodes are progressively removed at each step.

Figure~\ref{fig:reduce_smooth} shows the smooth score of the algorithms. MultiRRT obtains significantly smaller values in both mean and variance than the other algorithms in all scenarios. The rationale for this is the integration of dynamic constraints and the Bezier smoothing mechanism in MultiRRT. However, they also lead to extra computation time, which is about 0.1 s compared to the fastest method, as depicted in Figure {\ref{fig:reduce_time}}. This small extra calculation time does not affect the capability of MultiRRT as its overall execution time (less than 1 s) is still much faster than other approaches (see Table {\ref{tbl:other_al})} and sufficient for online replanning.

\subsection{Comparison with other RRT variants}

In this section, comparisons with other suboptimal variants of RRT, including RRT*~\cite{Karaman2011,Wang2024} and RRT*-Smart~\cite{Nasir2013}, are conducted to further evaluate the performance of the proposed method. As shown in Figure~\ref{fig:rrt_path}, all paths generated by the RTT variants are able to reach the targets without collision. Their path length, smoothness, and computation time in all scenarios are shown in Figures~\ref{fig:rrt_length}, \ref{fig:rrt_smooth}, and \ref{fig:rrt_time}, respectively. It can be seen that MultiRRT achieves the best score in all metrics. Especially the smooth score of the proposed method is at least three times better than the other algorithms due to the Bezier interpolation. This is an important feature of our algorithm to generate paths that the UAVs can accurately follow in practice.

\subsection{Comparison with other approaches}
\begin{table*}
\caption{Comparison with other approaches}
\label{tbl:other_al}
\centering
\begin{tabular}{{C{1.3cm}|C{1.1cm}C{1.1cm} C{1.1cm}|C{1.1cm}C{1.1cm} C{1.1cm}|C{1.1cm}C{1.1cm} C{1.1cm}}}
\hline \hline
        \multirow{2}{*}{\begin{tabular}[c]{@{}c@{}} Density\\(obs/m$^2$)\end{tabular}} & \multicolumn{3}{c|}{$F_L$ (m)} & \multicolumn{3}{c|}{$F_S$ } & \multicolumn{3}{c}{$F_T$ (s) } \\ 
        \cline{2-10}
         & PSO & Voronoi & MultiRRT & PSO & Voronoi & MultiRRT & PSO & Voronoi & MultiRRT \\ 
        \hline
        0.16 & \textbf{924.843} & 1355.831 & 937.671 & 0.772 & 0.480 & \textbf{0.097} & 106.282  & 55.778 & \textbf{0.073}\\ 
        0.22 & 1182.039 & 1255.206 & \textbf{1004.041} & 0.697 &  0.288 & \textbf{0.124} &174.213  & 91.723 & \textbf{0.047} \\ 
        0.3  & 1213.339 & 1698.931 & \textbf{1018.017} & 0.976 & 0.199 & \textbf{0.117}  & 218.718 & 129.561  & \textbf{0.056}\\ 
        \hline \hline
        \multicolumn{4}{l}{*Bold values indicate the best scores}
\end{tabular}
\end{table*}

To validate the performance of the proposed algorithm, we compared it with other approaches including a Voronoi-based method\mbox{\cite{9779232}} and a particle swarm optimization (PSO)-based method\mbox{\cite{Das2020}}. The comparisons were conducted in environments with varying obstacle densities (obs/m$^2$) of 0.16, 0.22, and 0.3. The results presented in Table~\mbox{\ref{tbl:other_al}} show that while PSO achieves the shortest path at a low obstacle density (0.16 obs/m$^2$), its performance degrades in more complex scenarios. The Voronoi approach is ineffective with the worst path length scores in all scenarios. In contrast, the proposed MultiRRT algorithm shows superior performance and stability across different scenarios with nearly constant metric values even as obstacle density increases. Notably, its computation time is much faster than the other methods, making it suitable for real-time execution, which is essential for cooperative path planning.

\begin{table}
\caption{Scalability analysis of MultiRRT}
\label{tbl:scalability}
\centering
\begin{tabular}{C{1.5cm}C{0.95cm}C{0.95cm}C{0.95cm}C{0.95cm}C{0.95cm}}
\hline \hline
No. of UAVs & 3        & 5       & 6       & 8       & 10        \\ \hline
$F_L$ (m)      & 1138.815 & 1162.778 & 1212.089 & 1154.804 & 1137.892 \\
$F_S$      & 0.140    & 0.142   & 0.136   & 0.131   & 0.122     \\
$F_T$ (s)      & 0.685    & 0.745   & 0.706   & 0.911   & 0.930    \\ \hline \hline
\end{tabular}
\end{table}

\begin{figure*}
\centering
    \begin{subfigure}[b]{0.485\textwidth}
    \centering
    \includegraphics[width=\textwidth]{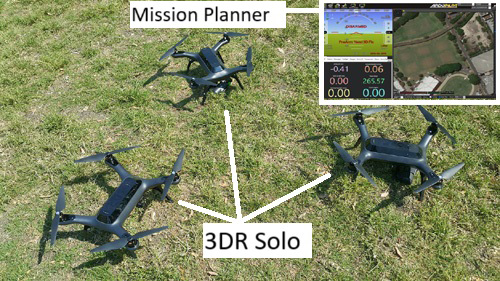}
    \caption{Three 3DR Solo drones with Mission Planner}
    \label{fig:drone1}
    \end{subfigure}
    \begin{subfigure}[b]{0.4\textwidth}
    \centering
    \includegraphics[width=\textwidth]{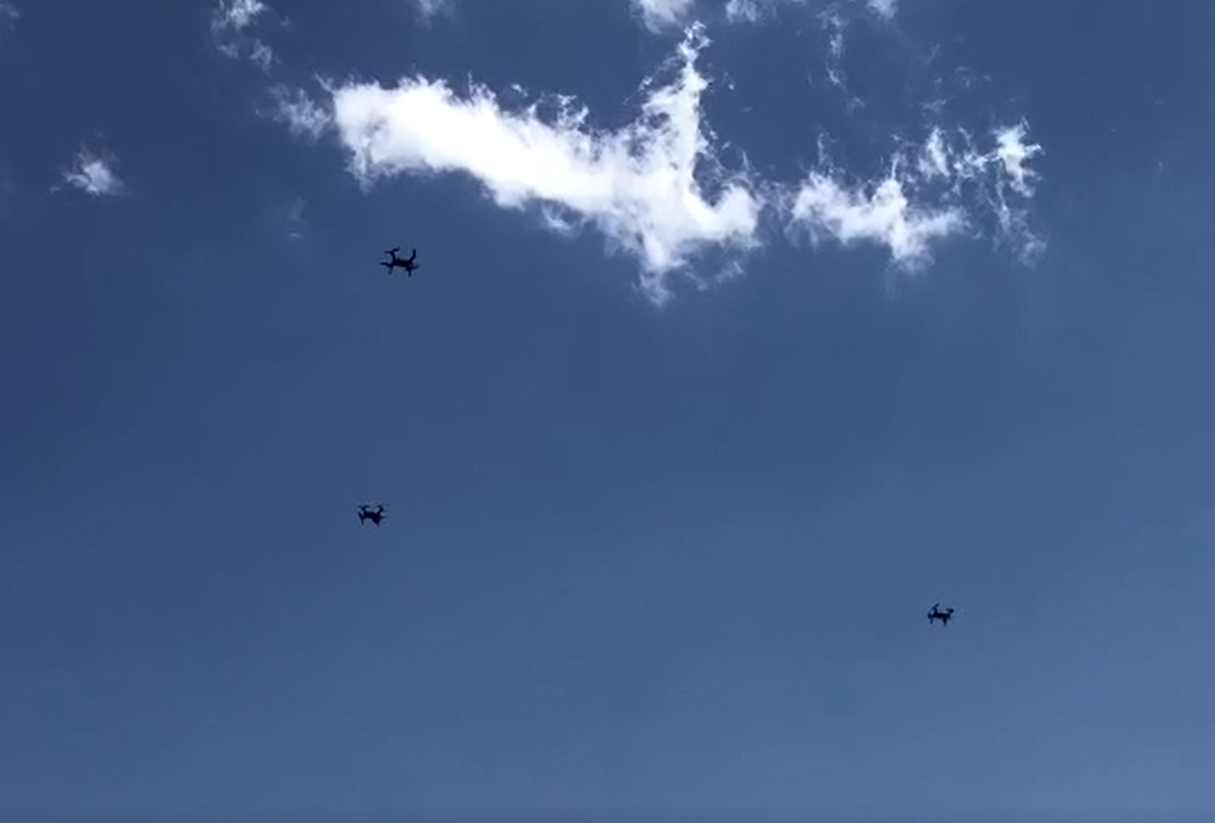}
    \caption{Three 3DR Solo drones in operation}
    \label{fig:drone2}
    \end{subfigure}
    \caption{The drones used for experiments}
    \label{fig:exper}
\end{figure*}

\begin{figure}
\centering
    \centering
    \includegraphics[width=0.47\textwidth]{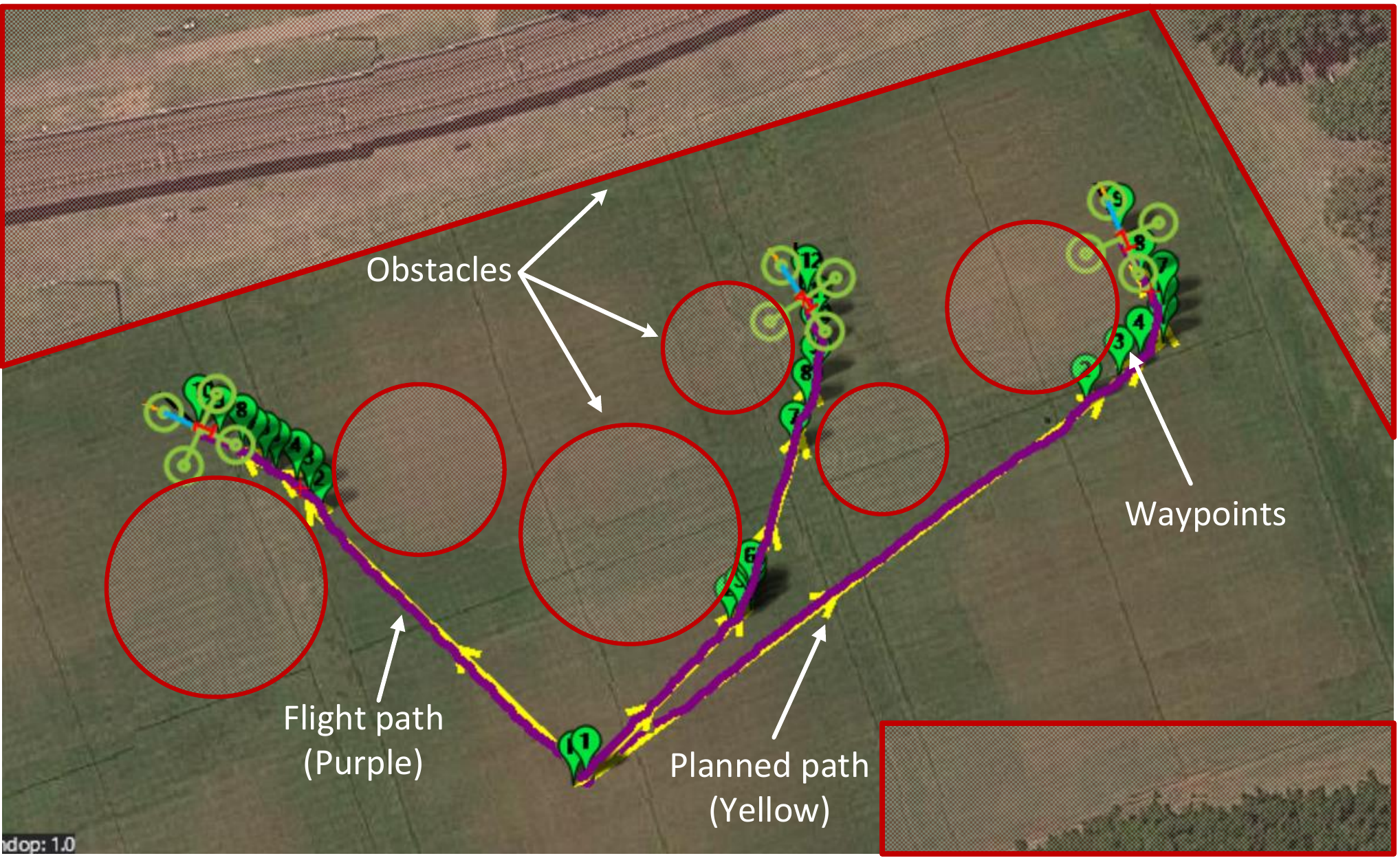}
    \caption{Experimental result with the planned (yellow) and actual (purple) flight paths of three UAVs programmed via Mission Planner. Note that the drone symbol of Mission Planner does not reflect the actual size of the UAVs.}
    \label{fig:exp_path}
\end{figure}

\subsection{Scalability analysis}

To analyze the scalability of the proposed method, we evaluated its performance metrics across different numbers of UAVs. As shown in Table~\mbox{\ref{tbl:scalability}}, the path length and smoothness scores remain stable. The extra execution time also remains minimal due to the multi-goal expansion within a single random tree in MultiRRT. These results confirm that the proposed algorithm can be scaled to accommodate a large number of UAVs.

\subsection{Experimental validation}
To validate the applicability of MultiRRT in generating flyable paths, experiments have been conducted in an area of size 180 m $\times$ 120 m within a park. The UAVs used are 3DR Solo drones that can be programmed to fly autonomously via a ground control station called Mission Planner. Figure~\ref{fig:drone1} and Figure~\ref{fig:drone2} respectively show the three UAVs with Mission Planner and their operation in experiments. The goal is to plan paths for the UAVs from the same starting location to three different goal locations and then upload the paths to the UAVs for automatic flight. To simulate practical scenarios, the environment is augmented with obstacles represented by red line areas shown in Figure~\ref{fig:exp_path}. These areas represent both physical objects such as trees and bridges, and non-physical objects like no-fly zones. The start location is set at the longitude and latitude of $(-33.876289,151.19243)$ and the goal locations are set at $(-33.875736,151.192739)$, $(-33.875898,151.191895)$, and $(-33.875669,151.193161)$. The flight altitudes are set to 5 m, 10 m, and 15 m to avoid collisions among the UAVs. This information is used as input to our MultiRRT path planning program written in Python to generate paths and reference velocities for the UAVs. Each path is represented by a set of waypoints, which is then transmitted to its corresponding UAV via Mission Planner to fly automatically. During operation, if major adjustments are required such as modifying the path or reference velocity, MultiRRT can be re-executed to replan the UAVs due to its fast execution time.

Figure~\ref{fig:exp_path} shows the planned (yellow) and actual (purple) flight paths of the three UAVs. It can be seen that the algorithm can generate collision-free paths for all the UAVs to reach their target locations. The overlapping between the actual flight paths and the planned paths means that the generated paths meet the safety and dynamics constraints for the UAVs to follow. Especially, the UAVs show good tracking performance at curved segments. The results thus confirm the validity of the proposed algorithm for practical flights of multiple UAVs.
\section{Conclusion}\label{sec:conclusion}
In this paper, we have presented a novel algorithm named MultiRRT for cooperative path planning of multiple UAVs. The algorithm extends the branch expanding mechanism of RRT to reach multiple goal positions. Importantly, it integrates new features related to node reduction, dynamics constraint and Bezier interpolation to ensure the optimality, safety and feasibility of the paths generated. Unlike existing Bezier fitting techniques, our approach considers obstacle locations and dynamics limits of the UAVs to calculate relevant interpolated curves. For the first time, we prove that the interpolation meets the dynamics and safety constraints to generate feasible flight paths. Simulation, comparison, and experiment results confirm the validity and effectiveness of our approach for multiple UAV operations. Our future works will focus on integrating ad hoc communication into the system and expanding the approach to tackle real-time UAV formation for complex collaborative tasks in dynamic environments.
% \input{06_acknowledgment}

%\balance
% Can use something like this to put references on a page
% by themselves when using endfloat and the captionsoff option.
% \ifCLASSOPTIONcaptionsoff
%   \newpage
% \fi

% references section
% \balance
\bibliographystyle{ieeetr}  
\bibliography{mybibfile}

%\vfill
% biography section
% 
% If you have an EPS/PDF photo (graphicx package needed) extra braces are
% needed around the contents of the optional argument to biography to prevent
% the LaTeX parser from getting confused when it sees the complicated
% \includegraphics command within an optional argument. (You could create
% your own custom macro containing the \includegraphics command to make things
% simpler here.)
%\begin{IEEEbiography}[{\includegraphics[width=1in,height=1.25in,clip,keepaspectratio]{mshell}}]{Michael Shell}
% or if you just want to reserve a space for a photo:

\begin{IEEEbiography}[{\includegraphics[width=1in,height=1.25in]{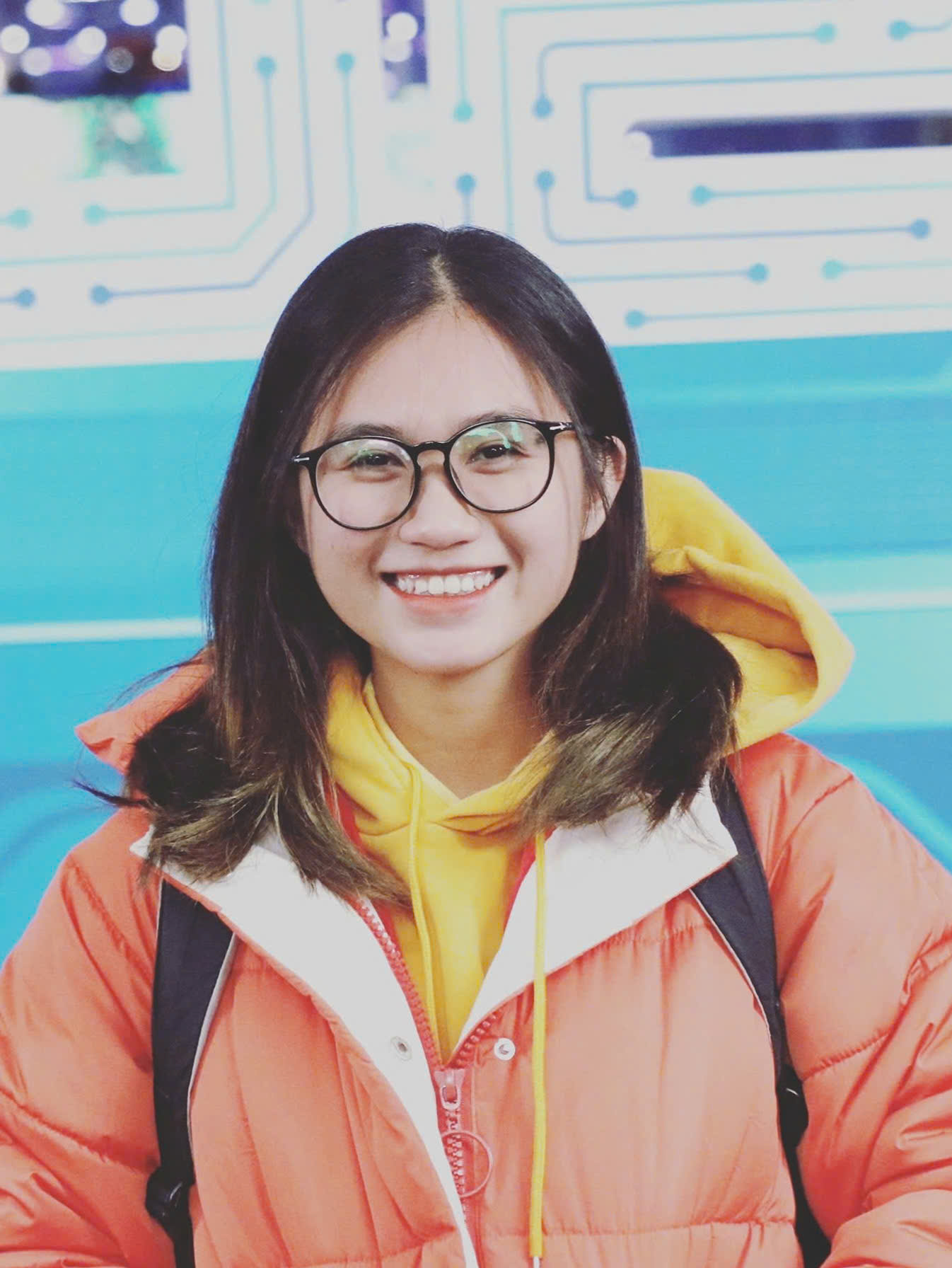}}]{Thu Hang Khuat} received her B.Eng. degree in Robotics Engineering from VNU University of Engineering and Technology in 2023. She is currently pursuing an M.Sc. degree in Electronics Engineering at the same institution. Her research interests include robotics, unmanned aerial vehicles (UAVs), path planning, and swarm optimization.
\end{IEEEbiography}

% if you will not have a photo at all:
\begin{IEEEbiography}[{\includegraphics[width=1in,height=1.25in]{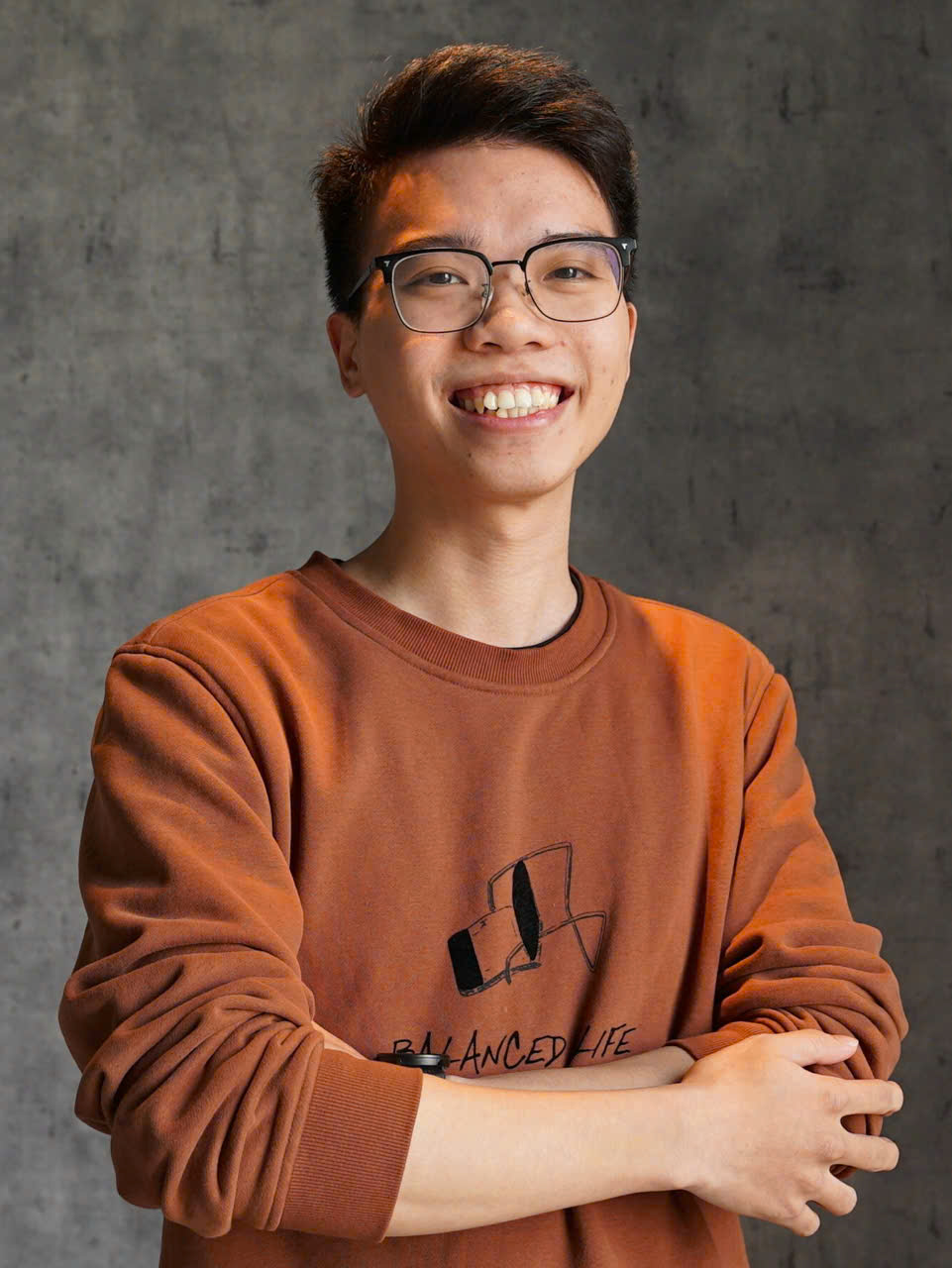}}]{Duy Nam Bui} received his B.Eng. degree in Robotics Engineering and M.Sc. degree in Electronics Engineering from VNU University of Engineering and Technology, Hanoi, Vietnam, in 2022 and 2024, respectively. His current research interests include formation control and cooperative path planning of multi-robot systems, as well as learning and optimization.
\end{IEEEbiography}

\begin{IEEEbiography}[{\includegraphics[width=1in,height=1.25in]{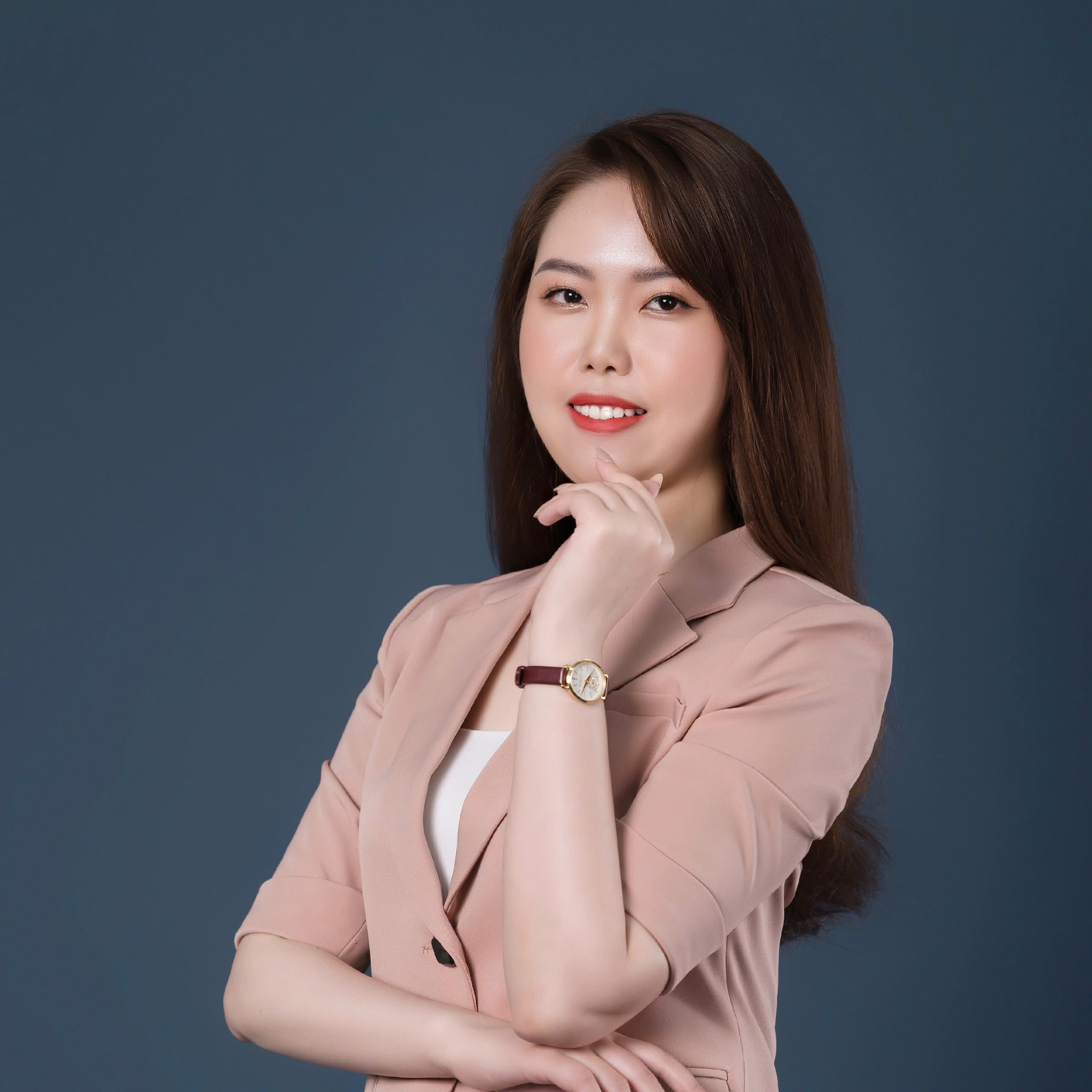}}]{Hoa T. T. Nguyen} received her B.S., M.S. degrees in Control Engineering and Automation from Thai Nguyen University of Technology, Viet Nam in 2013 and 2017, respectively. M.S. Hoa T. Nguyen is currently working as a lecturer and researcher at faculty of electronics engineering, Thai Nguyen University of Technology (TNUT), Vietnam, and also a member of Advanced Wireless Communication Networks (AWCN) Lab. She has interest and expertise in a variety of research topics in Automation, Control Systems, UAV networks, Formation Control, Adaptive Control, and Intelligent Systems.
\end{IEEEbiography}

\begin{IEEEbiography}[{\includegraphics[width=1in,height=1.25in]{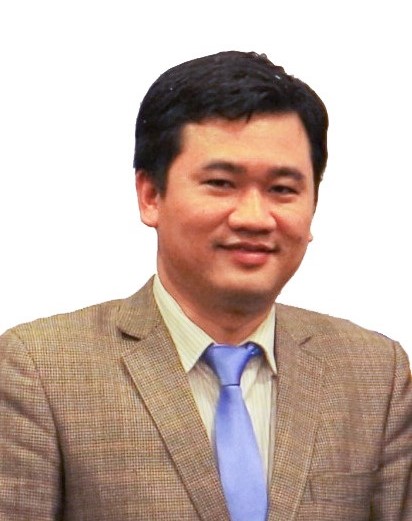}}]{Mien L. Trinh} obtained his Ph.D. from the Russian University of Transport in 2012. He is currently an Associate Professor, Head of Laboratory, Deputy Head of the Department of Cybernetics, and Vice Dean of the Faculty of Electrical and Electronic Engineering at the University of Transport and Communications, Vietnam. His research interests and expertise span a variety of topics in control and automation of machines and production lines, with a particular focus on intelligent control for electric vehicles, electric trains, robotics, and UAV networks.
\end{IEEEbiography}

\begin{IEEEbiography}[{\includegraphics[width=1in,height=1.25in]{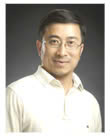}}] {Minh T. Nguyen} received his B.S., M.S., and Ph.D. degrees in Electrical Engineering from Hanoi University of Communication and Transport, Vietnam in 2001; Military Technical Academy, Hanoi, Vietnam in 2007; and Oklahoma State University, Stillwater, OK, USA in 2015, respectively. He is currently the Director of the Human Resource Training and Development Center (HDC) at Thai Nguyen University (TNU), a teaching and research professor at Thai Nguyen University of Technology (TNUT), Vietnam, and the Director of the Advanced Wireless Communication Networks (AWCN) Lab. His research interests and expertise span a wide range of topics in communications, networking, and signal processing, with a particular focus on compressive sensing, wireless/mobile sensor networks, robotics, and UAV networks. He serves as a technical reviewer for several prestigious journals and international conferences. He is also an editor for Wireless Communications and Mobile Computing, EAI Endorsed Transactions on Industrial Networks and Intelligent Systems, and serves as Editor-in-Chief of ICSES Transactions on Computer Networks and Communications.
\end{IEEEbiography}

\begin{IEEEbiography}[{\includegraphics[width=1in,height=1.25in,clip]{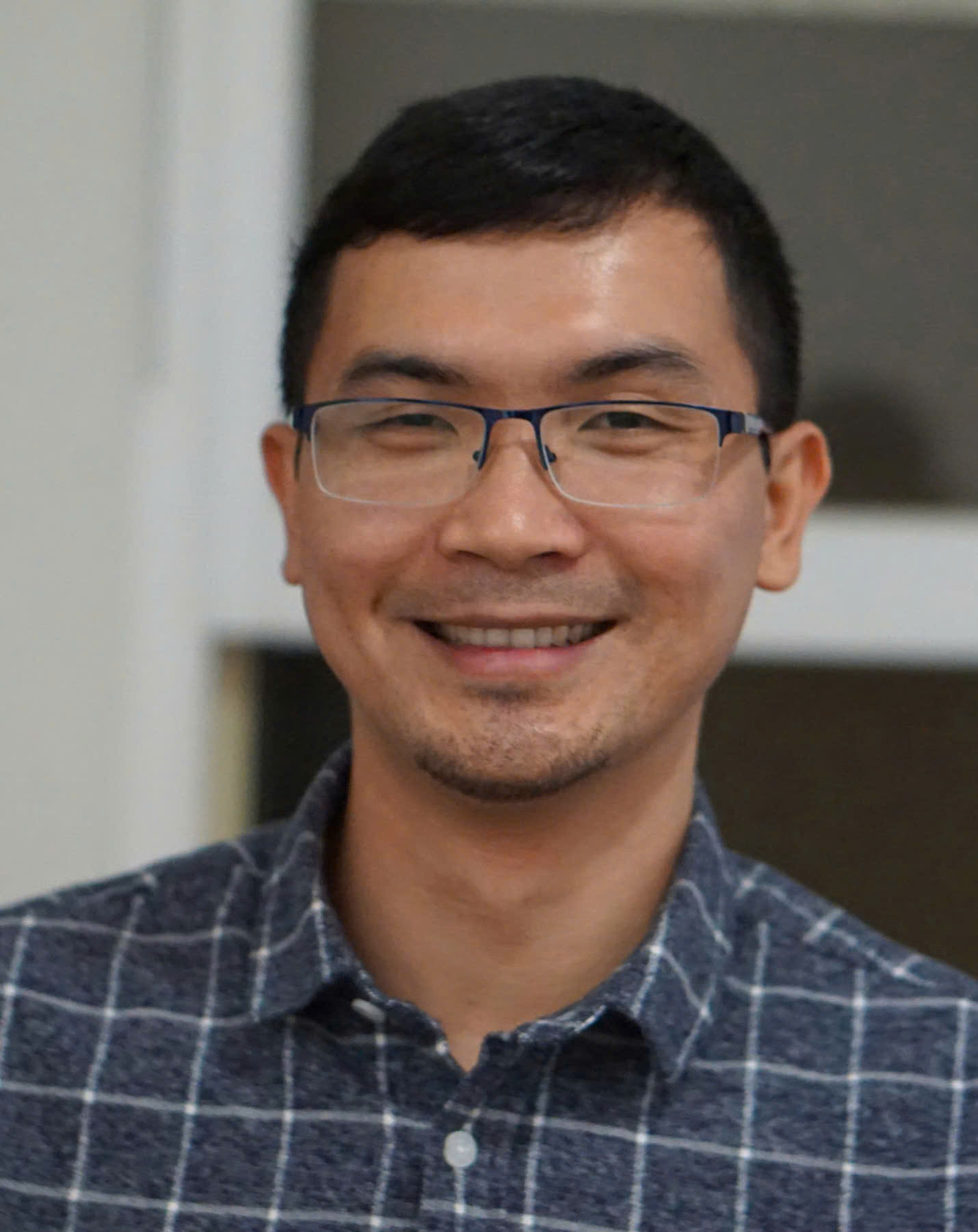}}]{Manh Duong Phung} received his Ph.D. from Vietnam National University, Hanoi, in 2015 and completed a postdoctoral fellowship at the University of Technology Sydney in 2017. He is currently a Senior Lecturer at Fulbright University Vietnam. His research interests include unmanned aerial vehicles, autonomous robots, and swarm intelligence. He has served on the organizing committees of various international conferences and as a reviewer for reputable journals published by IEEE, Elsevier, and Springer. He is the recipient of several research awards, including the Endeavour Research Fellowship from the Australian Government and the Best Paper Award at the Australasian Conference on Robotics and Automation.
\end{IEEEbiography}

% insert where needed to balance the two columns on the last page with
% biographies
%\newpage

% You can push biographies down or up by placing
% a \vfill before or after them. The appropriate
% use of \vfill depends on what kind of text is
% on the last page and whether or not the columns
% are being equalized.

% \vfill

% Can be used to pull up biographies so that the bottom of the last one
% is flush with the other column.
%\enlargethispage{-5in}

\end{document}